%% file: paper.tex
\documentclass{article}

\usepackage{microtype}
\usepackage{graphicx}
\usepackage{hyperref}
\PassOptionsToPackage{numbers, sort&compress}{natbib}

\usepackage[sglblindworkshop]{neurips_2026}

\usepackage{mathtools}
\usepackage{amsthm}
\usepackage{thmtools,thm-restate} 
\theoremstyle{plain}
\newtheorem{theorem}{Theorem}[section]

\theoremstyle{definition}
\newtheorem{definition}[theorem]{Definition}
\newtheorem{example}[theorem]{Example}
\newtheorem{assumption}{Assumption}

\theoremstyle{remark}

\usepackage[textsize=tiny]{todonotes}

\usepackage{latexsym}
\usepackage[linesnumbered, ruled, vlined]{algorithm2e}
\usepackage{amsmath,amssymb}
\usepackage{xspace}
\usepackage{booktabs}
\usepackage[most]{tcolorbox}
\tcbuselibrary{skins} 
\usepackage{pgfplotstable, pgfmath}
\usepackage[noenumitem,notheorems]{eda}
\usepackage{bbm}
\usepackage{colortbl}
\usepackage{prettyplots}
\usepackage[utf8]{inputenc} 
\usepackage[T1]{fontenc}    
\usepackage{hyperref}       
\usepackage{url}            
\usepackage{booktabs}       
\usepackage{amsfonts}       
\usepackage{nicefrac}       
\usepackage{microtype}      
\usepackage{xcolor}         
\usepackage{eda}
\pgfplotsset{
    colormap/viridis,
}
\graphicspath{ {./figs/} }
\pgfplotsset{compat=1.18}
\usetikzlibrary{calc} 
\usetikzlibrary{positioning}
\usetikzlibrary{backgrounds}
\usetikzlibrary{external}
\usetikzlibrary{intersections} 
\usepgfplotslibrary{fillbetween}
\usepgfplotslibrary{colorbrewer} 
\usetikzlibrary{pgfplots.colormaps} 
\usetikzlibrary{pgfplots.statistics} 
\usetikzlibrary{pgfplots.groupplots} 
\usepgfplotslibrary{ternary}
\usetikzlibrary{fit}
\usetikzlibrary{arrows}
\usepackage{tikzviolinplots} 
\usepackage{pgfplotstable} 
\usepackage{scontents} 
\usepackage{array}   
\usepackage{numprint}  
\usepackage{ifthen}
\usepackage{framed} 
\usepackage{enumitem} 
\usepackage{wrapfig}
\usepackage{listings}
\usepackage{rotating}
\usepackage{color}
\usepackage{subcaption} 
\usepackage{graphicx}
\usepgfplotslibrary{fillbetween}

\usepackage{adjustbox}
\usepackage{booktabs}
\usepackage{multirow}

\tcbset{
    plotbox/.style={
        enhanced, 
        colback=gray!3,
        colframe=gray!25,
        arc=3pt,              
        boxrule=0.6pt,
        left=1pt, right=1pt, top=8pt, bottom=4pt,
        fonttitle=\tiny\bfseries,
        coltitle=gray!70,
        attach boxed title to top center={yshift=-2mm},
        boxsep=2pt,
            equal height group=plots,
            valign=center 
        boxed title style={
            colback=white, 
            colframe=gray!25, 
        }
    }
}

\input{macro/defines}

\title{Identifying Structural Biases from\\Causal Mechanism Shifts}
\author{%
  Praharsh Nanavati\thanks{Corresponding Author e-mail: \texttt{praharsh.nanavati@cispa.de}} \qquad \qquad 
  Jilles Vreeken \qquad \qquad
  David Kaltenpoth \\
  CISPA Helmholtz Center for Information Security \\
}
\setcitestyle{numbers,square,comma}
\begin{document}

\maketitle

\begin{abstract}
Causal discovery methods commonly assume that all data is independently and identically distributed (i.i.d.) and that there are no unmeasured variables affecting the system. In practice, these assumptions are often violated, leading to inaccurate inference. In this paper, we study how to identify hidden confounding and selection biases from causal mechanism shifts. In particular, we show that structural biases lead to dependent mechanism shifts. That is, by considering for which variables the mechanisms change given data from different environments, we can tell which variables are unbiased, which are subject to hidden confounding, and which are undergoing selection bias. We formalize this into an empirically testable criterion based on mutual information, and show under which conditions it identifies structural biases. To tell which nodes are subject to what kind of bias, we introduce the \ourmethod algorithm. Experiments on synthetic and real-world data show that \ourmethod works well in practice, accurately recovering affected variable sets and types of biases, outperforming the state-of-the-art by a wide margin.
Our implementation is available at\quad \url{https://github.com/niftynans/strubi}.

\end{abstract}

\input{sections/introduction.tex}
\input{sections/related_work.tex}

\input{sections/preliminaries.tex}

\input{sections/theory.tex}
\input{sections/algorithm.tex}

\input{sections/experiments.tex}

\input{sections/discussion_conclusion.tex}


\clearpage
\bibliographystyle{plainnat}
\bibliography{bib/abbreviations,bib/bib-jilles,bib/bib-paper}
\clearpage

\appendix
\input{sections/appendix.tex}

\clearpage

\end{document}

%% file: macro/defines.tex
\DeclareMathOperator{\pval}{p-val}

\newcommand{\figbf}[1]{\textit{#1}}

\newcommand{\ourmethod}{\textsc{StruBI}\xspace}
\newcommand{\topic}{\textsc{Topic}\xspace}

\newcommand{\Pit}{\Pi^{\ast}}
\newcommand{\Piti}{\Pit_i}
\newcommand{\Pio}{\Pi^{\mathrm{obs}}}
\newcommand{\Pioi}{\Pio_i}
 
\newcommand{\X}{X}

\newcommand{\Pc}{P^{c}}
\newcommand{\Pci}{P^{c}_{i}}

\newcommand{\Pco}{P^c_{\mathrm{obs}}}
\newcommand{\Pcoi}{P^c_{i, \mathrm{obs}}}

\newcommand{\Gc}{\mathcal{G}}
\newcommand{\Gt}{\Gc^\ast}
\newcommand{\Cc}{\mathcal{C}}
\newcommand{\Eb}{\mathbb{E}}  

\newcommand{\Acc}{A_{c,c'}}

\DeclareMathOperator{\anc}{anc}
\DeclareMathOperator{\ch}{ch}
\DeclareMathOperator{\cov}{cov}

\DeclareMathOperator{\pa}{pa}
\DeclareMathOperator{\mi}{I}

\newcommand{\pat}{\pa^{\ast}}
\newcommand{\pati}{\pa^{\ast}_i}
\newcommand{\paoi}{\pa_i}

\newcommand{\ContingAB}{\mathcal M}

\newcommand{\f}[1]{f^{(#1)}}

\pgfplotsset{
	abstraction line/.style={
		dotted,
		thick
	}
}

\definecolor{clust1col}{RGB}{101,66,134}
\definecolor{clust2col}{HTML}{0891b2}
\definecolor{nipsvio}{RGB}{101,66,134}
\definecolor{testblue}{HTML}{06b6d4}
\tikzset{
	baseline_mechanism/.style={fill=steelblue!20, opacity=0.3},	 
	intervention_node/.style={cover_node,minimum height = 0.35cm, minimum width = 12pt,yshift = 5pt,xshift = -2mm},
	intervention1/.style={intervention_node,fill=goldenrod!45, 
	},
	clustering1/.style={intervention_node,fill=clust1col!45, 
	},
	clustering2/.style={intervention_node,fill=clust2col!45, 
	},
	intervention2/.style={intervention_node,fill=steelblue!40, 
	},
	intervention3/.style={intervention_node,fill=nipsvio!40,
	},
	intervention4/.style={intervention_node,fill=dollarbill!50, draw=dollarbill},
	intervention5/.style={intervention_node,fill=nipsvio!40, draw=nipsvio},
	intervention0/.style={intervention_node,fill=green(ryb)!45, draw=green(ryb)},
	itemset mark/.style={baseline=-4ex},
	text itemset/.style={draw=orange},
	pattern_node/.style={rectangle,rounded corners,thick, align=center, font=\small, inner sep=1.5pt},
	cover_node/.style={rectangle,line width=.5pt, rounded corners=0.1pt, align=right, font=\small, inner sep=1.5pt},
	select_node/.style={circle,thick, align=center, font=\small},
	ynode/.style={rectangle, draw=black, minimum width = 15pt, minimum height= 15pt},
	xnode/.style={rectangle, draw=black, minimum width = 15pt, minimum height= 15pt, text height = 6.5pt},
	label/.style={},
	dag_node/.style={circle,
		align=center, font=\scriptsize,
		text=black,
		line width = .4pt,
		scale = 1.0,text width=.5cm},
	invis_node/.style={draw=white!0},	i_node/.style={ 
		align=center, font=\scriptsize, 
		line width = .4pt,
		scale = 0.8,text height=.25cm, text width=.35cm},	i2/.style={i_node, goldenrod},
	i3/.style={i_node,steelblue},
	i1/.style={i_node, nipsvio} , 
	i4/.style={i_node, green(ryb)},
	invis_node/.style={draw=white!0}
}

\definecolor{s0}{HTML}{ff9b02} 
\definecolor{s1}{HTML}{F9D923} 
\definecolor{s2}{HTML}{134e6f} 
\definecolor{s3}{HTML}{ff6150} 
\definecolor{s4}{HTML}{1ac0c6} 
\definecolor{s5}{HTML}{36AE7C} 

\definecolor{e0}{HTML}{ff6150}
\definecolor{e1}{HTML}{1ac0c6}
\definecolor{e2}{HTML}{F9D923}
\definecolor{e3}{HTML}{36AE7C}
\definecolor{e4}{HTML}{134e6f}
\definecolor{e5}{HTML}{ff6150}
\definecolor{e6}{HTML}{1ac0c6}
\definecolor{e7}{HTML}{ff9b02}

\definecolor{abst1}{HTML}{E4C1F9}
\definecolor{abst2}{HTML}{4F7CAC}
\definecolor{abst3}{HTML}{9EEFE5}
\definecolor{abst4}{HTML}{541388}

\SetKwComment{tcpas}{\{}{\}}
\SetCommentSty{textnormal}
\SetArgSty{textnormal}
\SetKwRepeat{Do}{do}{while}
\SetKw{False}{false}
\SetKw{True}{true}
\SetKw{Null}{null}
\SetKwInOut{Output}{output}
\SetKwInOut{Input}{input}
\SetKw{AND}{and}
\SetKw{OR}{or}
\SetKw{Continue}{continue}

\pgfdeclarelayer{background}
\pgfdeclarelayer{foreground}
\pgfsetlayers{background,main,foreground}  

\definecolor{mygreen}{rgb}{0.0, 0.5, 0.0}
\definecolor{mygray}{rgb}{0.3, 0.3, 0.3}

\definecolor{variolila}{RGB}{115,100,137}
\definecolor{lilacgray}{RGB}{152,150,164}
\definecolor{red1}{RGB}{201,59,69}
\definecolor{cc1}{rgb}{0.59, 0.78, 0.64}
\definecolor{niceblue}{RGB}{3, 79, 132}
\definecolor{iceblue}{RGB}{128, 182, 207}
\definecolor{cc1}{rgb}{0.59, 0.78, 0.64}
\definecolor{niceblue}{RGB}{3, 79, 132}
\definecolor{iceblue}{RGB}{128, 182, 207}
\definecolor{ca3}{rgb}{0.83, 0.69, 0.22}
\definecolor{red1}{RGB}{201,59,69}
\definecolor{red2}{RGB}{169,22,48}
\definecolor{red3}{RGB}{249,129,115}
\definecolor{cl1}{rgb}{0.61, 0.77, 0.89} 
\definecolor{cl2}{rgb}{0.64, 0.68, 0.82} 
\definecolor{cl3}{rgb}{0.85, 0.65, 0.13} 
\definecolor{ca1}{rgb}{0.6, 0.81, 0.93} 
\definecolor{ca2}{RGB}{139, 204, 109} 
\definecolor{rosequartz}{RGB}{247,202,201}
\definecolor{serenity}{RGB}{145,168,209}
\definecolor{peachecho}{RGB}{247,120,107}
\definecolor{snorkelblue}{RGB}{3,79,132}
\definecolor{buttercup}{RGB}{250,224,60}
\definecolor{limpetshell}{RGB}{152,221,222}
\definecolor{lilacgray}{RGB}{152,150,164}
\definecolor{fiesta}{RGB}{221,65,50}
\definecolor{icedcoffee}{RGB}{177,143,106}
\definecolor{greenflash}{RGB}{121,199,83}
\definecolor{purple}{RGB}{153,0,153}
\definecolor{turquise}{RGB}{0,153,153}
\definecolor{poop}{RGB}{203,178,52}
\definecolor{blue1}{RGB}{215, 216, 233}
\definecolor{blue2}{RGB}{243, 243, 248}
\definecolor{varioblue}{RGB}{68, 114, 157}
\definecolor{variogreen}{RGB}{125, 171, 113}

%% file: sections/introduction.tex
\section{Introduction}
Methods for causal inference generally require causal sufficiency and i.i.d.-ness \citep{spirtes2000causation,pearl:09:causalitybook}. That is, to provably recover the causal ground truth, they require that all causally relevant variables are measured (sufficiency) and that the samples in the data are all independent and identically distributed (i.i.d.). While convenient in theory, neither of these are likely to hold in practice \citep{mameche2024identifying,mameche:23:linc}. Often, we do not know what the causally relevant variables are, and even if we do, we cannot always measure them. Often, we also do not know whether a system has a representative sample, or whether it is subject to selection bias. In either case, this can lead to causal models that can be arbitrarily wrong because the data will show spurious dependencies \citep{mameche2024identifying, spirtes2013causal, dai2025selection}. For example, if `shark attacks' and `ice cream sales' are statistically dependent, it may lead us to believe there is a causal dependency, even if they are actually conditionally independent given their \emph{unmeasured} confounder `season'. Similarly, if we only study the survival rates of fighter jets that returned from combat, then we get a distorted view of where more protective armor is most needed. This raises the question, can we detect whether the data at hand is unbiased, subject to hidden confounding, or subject to selection bias? 

In this paper, we show that by considering non-i.i.d. data collected from different contexts, the effects of confounding and selection bias are identifiable by considering distribution shifts of the observed variables. In particular, we show that actual, \emph{intrinsic}, mechanism shifts leave measurably different \emph{observed} mechanism shifts between the three cases. We give an example in Fig.~\ref{fig:illu}. Here, $X$ and $Y$ are children of observed variable $W$ and are either unbiased (left), confounded (center), or undergoing selection bias (right) through a latent collider $V$. We consider data gathered in five contexts, $c_1,\ldots,c_5$. In each context, interventions affect specific variables (hammers) and change their underlying generating process (colored boxes).  

In the unbiased case, where we measured all relevant variables, the intervention on $V$ in context $c_1$ (purple) do not affect the observed mechanisms of the other variables. The conditionals $P_{X \mid V, W }$ and $P_{Y \mid V, W}$, and the marginal $P_{W}$ all remain the same, they change independently in other contexts (yellow, blue, green). Under confounding (center), however, an intrinsic mechanism change of $V$ lead to observed changes in its children ($X$ and $Y$). Under selection bias (right), however, an intrinsic mechanism shift in $V$ leads to observed shifts in all of its upstream variables ($X$, $Y$, and $W$). 

We formally show under which conditions we can leverage this observation to identify and classify between the three cases. To do so, we assume that mechanisms shifts are sparse and independent. Importantly, unlike existing work~\cite{dai2025selection,dai2025latent}, our approach is non-parametric, making it generally applicable. To put our theory to practice, we propose the~\ourmethod algorithm for structural bias identification from empirical data.~\ourmethod operates on a given causal graph $\mathcal{G}$ and employs the KCI test by~\cite{zhang2012kernel} to determine if mechanisms change between contexts. We use adjusted mutual information to determine whether the mechanism shifts of different variables behave independently or not, and to classify between confounding or selection, consider many of the upstream variables change together. 

We empirically evaluate~\ourmethod on a wide range of synthetic data, comparing it to five state of the art methods. The experiments show that~\ourmethod outperforms its competitors by a wide margin, and that using an inferred causal graph gives comparable results to using the ground truth causal network. We verify that~\ourmethod is applicable on real-world data through a case study on cell-signaling data, where it correctly classifies between confounding and selection bias, and unlike its competitors, retrieves high-quality sets of affected variables in both settings. 

All code is available in the supplementary. We postpone all proofs to the appendix. 

\input{figs/tex_figs/illustration_new.tex}

%% file: figs/tex_figs/illustration_new.tex
\usetikzlibrary{fit}
\newcommand{\figureHeight}{8}

\newcommand{\contextSeq}{$c_{1}$/0, $c_2$/1, $c_3$/2, $c_4$/3, $c_5$/4} 
\newcommand{\xSeq}{ $ X_{1}$/0, $X_2$/1, $X_3$/2, $X_4$/3, $X_5$/4} 
\newcommand{\drawseq}[3]{
	\foreach \symbol/\i in \contextSeq{
		\node (lower\i) [anchor = north west, minimum height = 0.35cm, minimum width = 12pt, lilacgray] at (\i*0.45+#1,#2 + 0.08)  {};
		\node (left\i) [anchor = north west, minimum height = 0.35cm, minimum width = 12pt, lilacgray] at (\i*0.45+#1-0.6,#2 + 0.08+0.5)  {};
		\node (upper\i) [anchor = north west, minimum height = 0.35cm, lilacgray, minimum width = 12pt,cover_node, baseline_mechanism, ] at (\i*0.45+#1,#2 + 0.08+0.4) {\scriptsize #3};
	}
}

\newcommand{\drawseqAnnotated}[4]{
\foreach \symbol/\i in #4{
	\node (lower\i) [anchor = north west, minimum height = 0.35cm, minimum width = 12pt] at (\i*0.45+#1,#2 + 0.08) {\scriptsize {\symbol \strut}};
	\node (left\i) [anchor = north west, minimum height = 0.35cm, minimum width = 12pt, lilacgray] at (\i*0.45+#1-0.6,#2 + 0.08+0.5)  {};
	\node (upper\i) [anchor = north west, minimum height = 0.35cm, lilacgray, minimum width = 12pt,cover_node, baseline_mechanism, ] at (\i*0.45+#1,#2 + 0.08+0.4) {\scriptsize #3};
}
} 
\newcommand{\drawSeq}[3]{\drawseq{#1}{#2}{$p_#3$}}
\newcommand{\drawSeqAnnotated}[4]{\drawseqAnnotated{#1}{#2}{$p_#3$}{#4}}

\setlength{\intextsep}{0pt}

\tcbset{
        mybox/.style={
            enhanced,
            colback=gray!2,
            colframe=gray!20,
            arc=2pt,
            boxrule=0.5pt,
            left=8pt, right=8pt, top=8pt, bottom=8pt,
            boxsep=0pt,
			height=5.5cm
        }
    }

\begin{figure*}[t] 
    \centering
    \scalebox{0.75}{
    \begin{tabular}{@{} c c c @{}}
        
        \begin{tcolorbox}[mybox, width=5.2cm]
            \centering
            \begin{tikzpicture}[outer sep=0pt, inner sep=0pt]
                \tikzset{every node/.style={anchor=north west}}
                
                \def\posX{0}; \def\fH{8}; 
                \def\up{1}; \def\upmid{2.5}; \def\offG{1.5}; \def\offN{1}; 
                \def\upN{0.5}; \def\upL{.6}; \def\offP{0.5}; \def\offC{1.6}; \def\upC{0.4};
                    
                \node (causal) at (\posX,\fH) {\small \underline{\textit{Unbiased:}}};
                
                \node at (\posX, \fH - \up) {\small $G^\star:$}; 
                \node (x) [dag_node] at (\posX + \offG, \fH - \up - \upL) { $X$}; 
                \node (v) [dag_node] at (\posX + \offG + 0.9, \fH - \up - \upN + 0.75) {$V$};
                \node (y) [dag_node] at (\posX + \offG + \offN, \fH - \up - \upL) { $Y$};
                \node (w) [dag_node] at (\posX + \offG + 0.10, \fH - \up - \upL + 0.85) { $W$}; 

                \node (i1) [i1] at (\posX + \offG + \offN + 0.1, \fH - \up - \upN + 1.15) {$c_1$ \includegraphics[width=0.5cm]{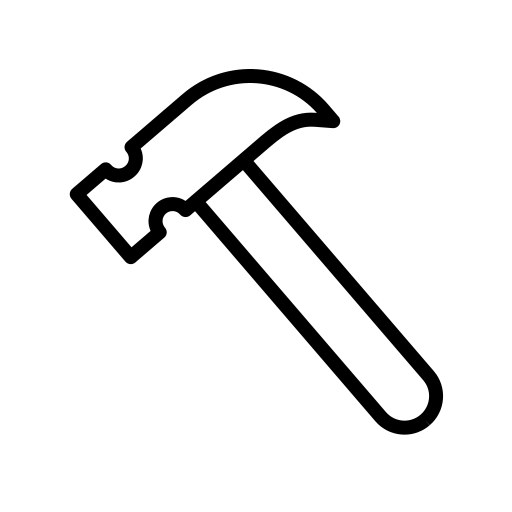}};
                \node (i2) [i2] at (\posX + \offG - 0.25, \fH - \up - \upL + 0.45) {$c_2$ \includegraphics[width=0.5cm]{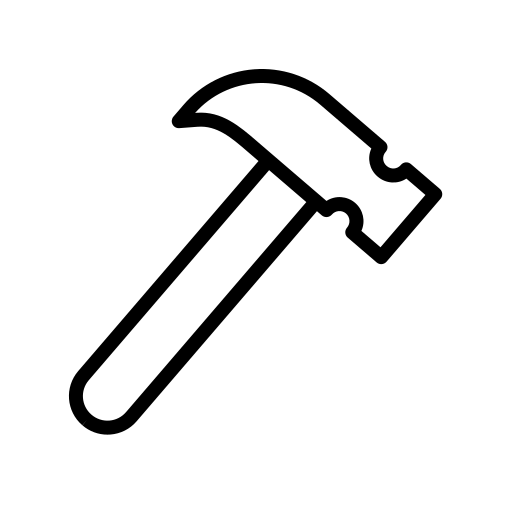}};
                \node (i3) [i3] at (\posX + \offG + \offN + 0.2, \fH - \up - \upL + 0.45) {$c_5$ \includegraphics[width=0.5cm]{figs/hammer-rev.png}};
                \node (i4) [i4] at (\posX + \offG - 0.25, \fH - \up - \upL + 1.25) {$c_4$ \includegraphics[width=0.5cm]{figs/hammer.png}};
                
                \path[-{latex[]}] (w) edge[] (x) (w) edge[] (y) (v) edge[] (x) (v) edge[] (y);
                
                \tikzset{labelnode/.style={minimum width=1.5cm, inner sep=0pt, font=\small}}
                \node[labelnode] at (\posX, \fH - \upmid) {$P_V:$};
                \node[labelnode] at (\posX, \fH - \upmid - \offP) {$P_W:$};
                \node[labelnode] at (\posX, \fH - \upmid - 2*\offP) {$P_{X \mid V, W }:$};
                \node[labelnode] at (\posX, \fH - \upmid - 3*\offP) {$P_{Y \mid V, W }:$};
                
                \drawSeq{\posX + \offC}{\fH - \upmid - \upC}{v};
                \node [nipsvio, intervention3] at (upper0) {\scriptsize $\tilde p_v$};
                \drawSeq{\posX + \offC}{\fH - \upmid - \offP - \upC}{w};
                \node [green(ryb), intervention0] at (upper3) {\scriptsize $\tilde p_w$};
                \drawSeq{\posX + \offC}{\fH - \upmid - 2*\offP - \upC}{x};
                \node [goldenrod, intervention1] at (upper1) {\scriptsize $\tilde p_x$};
                \drawSeqAnnotated{\posX + \offC}{\fH - \upmid - 3*\offP - \upC}{y}{\contextSeq};  
                \node [steelblue, intervention2] at (upper4) {\scriptsize $\tilde p_y$};  
            \end{tikzpicture}
        \end{tcolorbox}
        &
        \begin{tcolorbox}[mybox, width=5.2cm]
            \centering
            \begin{tikzpicture}[outer sep=0pt, inner sep=0pt]
                \tikzset{every node/.style={anchor=north west}}
                
                \def\posX{0}; \def\fH{8}; 
                \def\up{1}; \def\upmid{2.5}; \def\offG{1.5}; \def\offN{1}; 
                \def\upN{0.5}; \def\upL{.6}; \def\offP{0.5}; \def\offC{1.6}; \def\upC{0.4};

                \node (confounded) at (\posX,\fH) {\small \underline{\textit{Hidden Confounding:}}};
                
                \node at (\posX, \fH - \up) {\small $G^\star:$}; 
                \node (x) [dag_node] at (\posX + \offG, \fH - \up - \upL) { $X$}; 
                \node (v) [dag_node] at (\posX + \offG + 0.9, \fH - \up - \upN + 0.75) {$V$};
                \node [draw, dotted, thick, inner sep=-3pt, fit=(v), circle] {};
                \node (y) [dag_node] at (\posX + \offG + \offN, \fH - \up - \upL) { $Y$};
                \node (w) [dag_node] at (\posX + \offG + 0.1, \fH - \up - \upL + 0.85) { $W$};

                \node (i1) [i1] at (\posX + \offG + \offN + 0.30, \fH - \up - \upN + 1.15) {$c_1$ \includegraphics[width=0.5cm]{figs/hammer-rev.png}};
                \node (i2) [i2] at (\posX + \offG - 0.25, \fH - \up - \upL + 0.45) {$c_2$ \includegraphics[width=0.5cm]{figs/hammer.png}};
                \node (i3) [i3] at (\posX + \offG + \offN + 0.2, \fH - \up - \upL + 0.45) {$c_5$ \includegraphics[width=0.5cm]{figs/hammer-rev.png}};
                \node (i4) [i4] at (\posX + \offG - 0.25, \fH - \up - \upL + 1.25) {$c_4$ \includegraphics[width=0.5cm]{figs/hammer.png}};

                \path[-{latex[]}] (v) edge[dotted] (x) (v) edge[dotted] (y) (w) edge[] (x) (w) edge[] (y);

                \tikzset{labelnode/.style={minimum width=1.5cm, inner sep=0pt, font=\small}}
                \node[labelnode] at (\posX, \fH - \upmid - \offP) {$P_W:$};
                \node[labelnode] at (\posX, \fH - \upmid - 2*\offP) {$P_{X\mid W}:$};
                \node[labelnode] at (\posX, \fH - \upmid - 3*\offP) {$P_{Y\mid W}:$};
                
                \drawSeq{\posX + \offC}{\fH - \upmid - \offP - \upC}{w};
                \node [green(ryb), intervention0] at (upper3) {\scriptsize $\tilde p_w$};
                \drawSeq{\posX + \offC}{\fH - \upmid - 2*\offP - \upC}{x};
                \node [nipsvio, intervention3] at (upper0) {\scriptsize $\tilde p_v$};
                \node [goldenrod, intervention1] at (upper1) {\scriptsize $\tilde p_x$};
                \drawSeqAnnotated{\posX + \offC}{\fH - \upmid - 3*\offP - \upC}{y}{\contextSeq};  
                \node [nipsvio, intervention3] at (upper0) {\scriptsize $\tilde p_y$};
                \node [steelblue, intervention2] at (upper4) {\scriptsize $\tilde p_y$};  
            \end{tikzpicture}
        \end{tcolorbox}
        &
        \begin{tcolorbox}[mybox, width=5.2cm]
            \centering
            \begin{tikzpicture}[outer sep=0pt, inner sep=0pt]
                \tikzset{every node/.style={anchor=north west}}
                
                \def\posX{0}; \def\fH{8}; 
                \def\up{1}; \def\upmid{2.5}; \def\offG{1.5}; \def\offN{1}; 
                \def\upN{0.5}; \def\upL{.6}; \def\offP{0.5}; \def\offC{1.6}; \def\upC{0.4};

                \node (confounded2) at (\posX,\fH) {\small \underline{\textit{Selection Bias:}}};
                
                \node at (\posX, \fH - \up) {\small $G^\star:$}; 
                \node (x) [dag_node] at (\posX + \offG, \fH - \up - \upL) { $X$}; 
                \node (v) [dag_node] at (\posX + \offG + 0.9, \fH - \up - \upL + 0.95) {$V$};
                \node [draw, dotted, thick, inner sep=-1pt, fit=(v)] {};
                \node (y) [dag_node] at (\posX + \offG + \offN, \fH - \up - \upL) { $Y$};
                \node (w) [dag_node] at (\posX + \offG + 0.1, \fH - \up - \upL + 0.85) { $W$};

                \node (i1) [i1] at (\posX + \offG + \offN + 0.30, \fH - \up - \upN + 1.35) {$c_1$ \includegraphics[width=0.5cm]{figs/hammer-rev.png}};       
                \node (i2) [i2] at (\posX + \offG - 0.25, \fH - \up - \upL + 0.45)  {$c_2$ \includegraphics[width=0.5cm]{figs/hammer.png}};
                \node (i3) [i3] at (\posX + \offG + \offN + 0.2, \fH - \up - \upL + 0.45) {$c_5$ \includegraphics[width=0.5cm]{figs/hammer-rev.png}};
                \node (i4) [i4] at (\posX + \offG - 0.25, \fH - \up - \upL + 1.25) {$c_4$ \includegraphics[width=0.5cm]{figs/hammer.png}};

                \path[-{latex[]}] (x) edge[dotted] (v) (y) edge[dotted] (v) (w) edge[] (x) (w) edge[] (y);

                \tikzset{labelnode/.style={minimum width=1.5cm, inner sep=0pt, font=\small}}
                \node[labelnode] at (\posX, \fH - \upmid - \offP) {$P_W:$};
                \node[labelnode] at (\posX, \fH - \upmid - 2*\offP) {$P_{X\mid W}:$};
                \node[labelnode] at (\posX, \fH - \upmid - 3*\offP) {$P_{Y\mid W}:$};

                \drawSeq{\posX + \offC}{\fH - \upmid - \offP - \upC}{w};
                \node [nipsvio, intervention3] at (upper0) {\scriptsize $\tilde p_w$};
                \node [green(ryb), intervention0] at (upper3) {\scriptsize $\tilde p_w$};
                \drawSeq{\posX + \offC}{\fH - \upmid - 2*\offP - \upC}{x};
                \node [nipsvio, intervention3] at (upper0) {\scriptsize $\tilde p_x$};
                \node [goldenrod, intervention1] at (upper1) {\scriptsize $\tilde p_x$};
                \drawSeqAnnotated{\posX + \offC}{\fH - \upmid - 3*\offP - \upC}{y}{\contextSeq};  
                \node [nipsvio, intervention3] at (upper0) {\scriptsize $\tilde p_y$};
                \node [steelblue, intervention2] at (upper4) {\scriptsize $\tilde p_y$};

            \end{tikzpicture}
        \end{tcolorbox}
    \end{tabular}
    }
\caption{\figbf{Structural biases introduce dependent mechanism shifts.} We consider three systems of four variables across five different contexts. 
                The unbiased system (left) exhibits independent mechanism shifts. 
                Under confounding (center), an \emph{actual} mechanism change on \emph{hidden} confounder $V$ in context $c_1$, leads to \emph{observed} mechanism changes in variables $X|V,W$ and $Y|V,W$ for the same context. 
                Under selection on a \emph{hidden} collider (right), a mechanism change on that collider ($V$) leads to \emph{observed} mechanism changes for \emph{all} its ancestors, $X$, $Y$, and even $W$.} \label{fig:illu} 


\end{figure*}

%% file: sections/related_work.tex
\section{Related Work}
Inferring causality from observational data is a fundamental challenge in statistics \citep{rubin1974estimating,spirtes2000causation,pearl:09:causalitybook}. In a nutshell, it is only possible to do so under assumptions of the data generating model~\cite{pearl:09:causalitybook}. Standard assumptions include the causal Markov condition, causal faithfulness, causal sufficiency, and i.i.d.-ness. Under these assumptions, classical constraint-based \citep{spirtes2000causation,zhang2008completeness} and score-based \citep{chickering2002learning,ramsey2017million} methods can recover causal structures up to Markov equivalence.

Causal discovery under latent confounding is an active topic of research. The FCI family~\citep{spirtes2000causation,colombo2012learning,ogarrio2016hybrid} can discover and mark edges that are likely confounded. Nested Markov Models (NMMs) \citep{shpitser2014introduction,shpitser2018acyclic,richardson2017nested,evans2019smooth} leverage Verma constraints to identify latent factors. DCD~\citep{bhattacharya2021differentiable} combines NMMs with differentiable constraints \citep{zheng2018dags} to discover partial DAGs. \citet{reddy:22:counterfactual} utilize mutual information on observed distributions, and \citet{karlsson:23:confounding} address latent confounding for effect estimation under a fixed graph structure. \citet{kaltenpoth2023causal, kaltenpoth2023nonlinear} model latent confounders directly, allowing identification of which nodes \emph{share} latent confounders.  

There exists much less research on causal discovery under selection. FCI~\citep{spirtes2000causation} is sound under selection, but \citet{spirtes2013causal} show that single observational studies are generally insufficient to distinguish between confounding or selection. To resolve this, \citet{versteeg2022local} identify local patterns to detect causal effects under both biases. Recent work has expanded these capabilities to temporal dependencies in sequential data \citep{zheng2024detecting}, "twin graph" frameworks for interventional settings \citep{dai2025selection}, and rank-based constraints for linear Gaussian models \citep{dai2025latent}. These methods respective require it i.i.d. or sequential data, or strict parametric assumptions. 

Recent research leverages non-i.i.d. data, in particular distribution shifts across different environments, to relax the assumptions necessary to identify causal DAGs. Existing methods primarily focus on handling latent confounding. JCI~\citep{mooij2020joint} extends existing causal discovery methods with a context variable.  \citet{perry:sms} focus on sparse mechanism shifts, whereas \textsc{CD-NOD}~\cite{huang2020causal} and \textsc{CoCo}~\citep{mameche2024identifying} both exploit independent mechanism changes. Our approach leverages multi-environment data to identify the presence of, and, which nodes are affected by hidden confounding resp. selection bias by detecting dependent mechanism shifts.

%% file: sections/preliminaries.tex
\section{Preliminaries}
\label{sec:setup}
In this section, we establish conditions necessary to distinguish confounding from selection bias in non-i.i.d. settings. We begin by defining context-indexed observed distributions and show how they are generated by latent causal mechanisms. 
We then ask the central question: what patterns in the observed context shifts differentially reveal latent confounding and respectively selection?

\subsection{Problem Setting}

We consider variables $V = X \cup Z \cup S$, where $X$, $Z$, and $S$ denote the sets of observed variables, hidden confounders, and hidden selection variables, respectively. We assume that $X = (X_1\ldots,X_d)$ consists of $d$ dimensions and the system is observed across multiple contexts $c \in \Cc$, with $n_c = |\Cc|$.
For each context, the full data-generating distribution is $P^{c}(V) = P(V \mid C=c)$. We assume that all these distributions are \emph{Markov} and \emph{faithful} with respect to the same causal DAG $\Gt = (V, E)$, with edge $(i,j) \in E$ whenever $V_i$ is a direct cause of $V_j$. We write $\pati = \{V_j : (j,i) \in E\}$ for the parents of $V_i$ in $\Gt$, and $\paoi = \pati \cap X$ for observed parents of variables $X_i$. We further write $\ch_i$ for the set of all children of $V_i$, $\anc_i$ for the set of all ancestors, and $\anc^+(U) = \anc(U) \cup U$.

We assume that latent confounders $Z$ are exogenous, while latent selection variables $S_{\ell}$ are sink nodes in $\Gt$ with $\pat(S_{\ell}) \subseteq X$. The observed distribution in context $c$ is then given by
\[
  \Pco(X) := \Pc(X \mid R(S)=1)\,,
\]
where $R = R(S) = 1$ indicates retention of a sample. Absent selection, $R \equiv 1$ and $\Pco(X) = \Pc(X)$. We make the standard assumption \citep{kaltenpoth2023identifying} that selection is \emph{positive and nondegenerate}, i.e., $0 < \Pc(R(S)=1 \mid \pa(R))$ for all $c$, and $\Pc(R(S) \mid \pa(R)) < 1$ for some $c \in \Cc$. This excludes selection rules where selection is so strict that there is no data left in some contexts.

We do not assume causal sufficiency over $X$ alone. Instead, all hidden common causes of observed variables are assumed to be included in $Z$, and all selection-induced conditioning is represented by $S$. 

The question is therefore: what can observed mechanisms $\Pco$ tell us about the graph $\Gt$ over $V$?
\paragraph{Problem Statement}
\textit{Given the set of observed distributions $\{\Pco(X)\}_{c \in \Cc}$, our goal is to identify subsets of $X$ whose observed local mechanisms shift jointly across contexts, and to distinguish, when possible, whether these shifts are better explained by latent confounding or by selection bias.}

\subsection{Intrinsic and Observed Causal Mechanism Shifts}
A key distinction in our setting is between \emph{intrinsic} mechanism shifts in all variables, $V_i \in V$ and \emph{observed} mechanism shifts after marginalizing latent confounders and conditioning on selection.

For any variable $V_i \in V$, we refer to the conditional distribution $P^c(V_i \mid \pati)$ as its intrinsic local mechanism in context $c$.
This induces an intrinsic partition $\Piti$ of the context set: two contexts $c, c'$ belong to the same block when the mechanisms generating $V_i$ are the same, \mbox{$\Pc(V_i \mid \pati) = P^{c'}(V_i \mid \pati)$}. 
We denote the block to which $c$ belongs by $\Piti(c)$ and therefore have $\Piti(c) = \Piti(c')$ in these cases.
We call a pair of contexts $c, c'$ such that $\Piti(c) \neq \Piti(c')$ an \emph{intrinsic shift}.
We assume that these intrinsic shifts are jointly independent and sparse~\citep{mameche2024identifying,perry:sms}.
\begin{assumption}[Sparse Independent Intrinsic Shifts]
  We assume that intrinsic partitions $\{\Piti\}_{i}$ are jointly independent, and that for any given $V_i$ shifts are sparse, $0 < P(\Piti(C) \neq \Piti(C')) < 0.5$.
\end{assumption}
This assumption commonly holds in realistic settings where shifts occur due to interventions on a system \citep{perry:sms}. For example, in biological data, data from observed distributions is much easier to obtain than interventional data, and such interventions usually target only one feature at a time.

Just as the intrinsic shifts of $\Pci = \Pc(V_i \mid \pati)$ induce partitions $\Piti$, the observed distributions $\Pcoi = \Pc(X_i \mid \paoi, R = 1)$ can also shift and induce observed partition $\Pioi$ of the contexts. In general, every intrinsic shift of $X_i$ between $c, c'$ also implies an observed shift between $c, c'$.
For two contexts $c,c'$, we define the \emph{observed affected set}
\[
A_{c,c'} := \{X_i \in X : \Pio(c) \neq \Pio(c') \}\,.
\]
Our goal in this paper is to relate the observed partitions $\Pio$ of variables $X$ to the intrinsic partitions $\Pit$ of all variables $V$.
In the next section we show that variables $X_i, X_j$ which are members of $\Acc$ for many $c, c'$ are prime targets for suspecting the influence of latent mechanism shifts.

%% file: sections/theory.tex
\section{Theory}
\label{sec:theory}


In this section we connect how observed partitions relate to the effects of latent confounding and selection by studying the network topology of nodes whose observed partitions correlate.
Proofs for all results can be found in the appendix.

\subsection{Difference in Shifts in Observed Variables Based on the Structural Bias}
We begin with a simple example to illustrate how latent parents of a variable $X$ and latent selection on a variable $X$ can have different impacts on its upstream variables.
\begin{example}
  \label{ex:confounding-vs-selection}
  Suppose that variable $X$ is caused by a latent $Z$ and observed $W$, and that $Z, W, \epsilon_X$ are jointly independent,
  \begin{equation}
    X = \alpha Z + \gamma W + \epsilon_X, \qquad \mu_c = E[Z \mid C = c]
  \end{equation}
  If $\mu_c=0$ and $\mu_{c'}=\delta$, then the observed conditional mean changes as
  \begin{align}
    \mathbb{E}[X \mid W=w, C=c]
    &= \gamma w, \\
    \mathbb{E}[X \mid W=w, C=c']
    &= \alpha \delta + \gamma w.
  \end{align}
  Thus the observed distribution $\Pc(X \mid W)$ changes across contexts because it marginalizes over the shifted latent parent $Z$. In contrast, the observed distribution $\Pc(W)$ does not shift.

  Now suppose that there is no latent confounder, but that selection acts through a descendant $S$ of $X$,
  \begin{equation}
    X = \gamma W + \epsilon_X,
    \qquad
    S = aX + \epsilon_S,
    \qquad
    R = \mathbf{1}\{S > \tau_c\}\,,
  \end{equation}
  where $R=1$ denotes that a sample is retained in context $c$.
  By Bayes' theorem, we get
  \begin{align}
    p_c(x \mid w, R=1)
    \propto
    \frac{\Pc(R=1 \mid X=x, W=w)}{\Pc(R=1 \mid W=w)}
    =
    \frac{\Pc(\epsilon_S > \tau_c - ax)}
    {\int \Pc(\epsilon_S > \tau_c - au) p(u \mid w)\,du}\,.
  \end{align}
  When $a \neq 0$, this weight depends on $x$, so the observed distribution $\Pc(X \mid W, R=1)$ changes when $\tau_c$ changes. Furthermore, the observed distribution $\Pc(W \mid R = 1)$ also changes:
  \begin{align}
    p_c(w \mid R=1)
    \propto \frac{\int \Pc(\epsilon_S > \tau_c - au) p(u \mid w)\,du\,.}{\Pc(R=1)}\,.
  \end{align}
\end{example}
Hence selection makes both its parent $X$ and its observed ancestor $W$ shift, while a changing latent parent affects only the observed distributions of $X$ that directly marginalize over it.


Next we show that this crucial asymmetry between latent confounding and selection holds generically for large classes of parametric systems.
\begin{restatable}[Observed Local Signatures of Confounding and Selection]{theorem}{theoreticalidentifiability}
  \label{thm:generic-shift-faithfulness}
  Let \(\Gt\) be the true graph over $V$, with precisely one latent variable, $\left| Z \cup S \right| = 1$.
  Let $A$ be given by
  \[
    A = \begin{cases}
      \ch_{\Gt}(Z), & \text{for a latent confounder},\\
      \anc_{\Gt}(S), & \text{for selection}\,.
        \end{cases}
  \]
  Let \(\{P_\theta^c: c \in C,\theta\in\Theta\}\) be a parametric family with \(\Theta\subseteq\mathbb{R}^d\) open and connected, such that all joint, marginal and conditional densities are real analytic in $\theta$ on a non-empty common support.
Let \(c\) and \(c'\) be two contexts in which only the latent variable's mechanism changes. Suppose that for every \(X_i\in A\), there exists at least one $\theta_0 \in \Theta$ such that $\Pco(X_i\mid \pa_{\mathcal{G}_X}(X_i); \theta_0) \neq P^{c'}_{\mathrm{obs}}(X_i \mid \pa_{\mathcal{G}_X}(X_i); \theta_0)$.

   Then $A_{c,c'} = A$ for all $\theta$ outside a Lebesgue-null subset of $\Theta$.
\end{restatable}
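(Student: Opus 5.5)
The proof has two inclusions: $A_{c,c'} \subseteq A$ holds structurally for every $\theta$, and $A \subseteq A_{c,c'}$ holds off a null set by an analytic-function argument.

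\emph{Structural inclusion $A_{c,c'} \subseteq A$.} Assume $c$ and $c'$ differ only in the mechanism of the single latent variable, so every other intrinsic mechanism $P^c(V_i \mid \pati)$ is identical in both contexts. For a latent confounder $Z$, which is exogenous, factorise the observed joint over $X$ as
\[
\int P(Z)\prod_i P(X_i \mid \pati)\,dZ .
\]
For each $X_i \notin \ch_{\Gt}(Z)$, this gives $\Pco(X_i \mid \paoi) = P^c(X_i \mid \pati)$, because $\pati = \paoi$ and $Z$ is d-separated from $X_i$ given $\paoi$ in the marginalised model. That conditional is intrinsic, hence context-invariant, so $X_i \notin A_{c,c'}$. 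For selection on $S$, we have
\[
P^c_{\mathrm{obs}}(X) \propto \prod_i P(X_i \mid \paoi)\, P^c(R=1 \mid \pa(S)) .
\]
If $X_i \notin \anc(S)$, the selection weight depends only on $X_i$'s non-descendants, since the ancestors of $S$ are closed upward. Integrating out the descendants of $X_i$ then shows that $\Pco(X_i \mid \paoi)$ equals the intrinsic conditional. This step is a routine d-separation / factorisation check, and I would write it using $\anc^+$.

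\emph{Generic inclusion $A \subseteq A_{c,c'}$.} Fix $X_i \in A$ and define on $\Theta$
\[
g_i(\theta) := \int \bigl(\Pco(x_i \mid \paoi;\theta) - P^{c'}_{\mathrm{obs}}(x_i\mid \paoi;\theta)\bigr)^2 w(x)\,dx ,
\]
with a fixed positive weight $w$ on the common support. Alternatively, I would use pointwise evaluations at a countable dense set of points $x$, which avoids integrability issues. Each observed conditional is a ratio of marginals, and each marginal is obtained by integrating or conditioning the joint on $R=1$. By hypothesis these are real analytic in $\theta$, and the denominators are positive on the common support by positivity of selection. So each pointwise difference $h_{i,x}(\theta)$ is real analytic on the connected open set $\Theta$.

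The hypothesis gives some $\theta_0$ and some point $x$ with $h_{i,x}(\theta_0) \neq 0$. So $h_{i,x} \not\equiv 0$, and by the standard fact that the zero set of a nonzero real analytic function on a connected open domain is Lebesgue-null, $N_i=\{\theta : h_{i,x}(\theta)=0\}$ is null. For $\theta \notin N_i$, the observed conditional of $X_i$ differs between $c$ and $c'$, so $X_i \in A_{c,c'}$. Taking the finite union $\bigcup_{X_i\in A} N_i$ gives a null exceptional set, and outside it $A \subseteq A_{c,c'}$. Combined with the structural inclusion, this yields equality.

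\emph{Main obstacle.} The delicate step is making the analyticity of $h_{i,x}$ rigorous. Interchanging the marginalisation integrals with analyticity needs the stated assumption that \emph{all} marginal and conditional densities are analytic. One must also check that the chosen evaluation point $x$ stays in the common support for every $\theta$. I would resolve this by appealing directly to the theorem's assumption that the conditionals themselves are analytic. I would also choose $x$ from the support where the hypothesis is witnessed, so that no interchange argument is needed.
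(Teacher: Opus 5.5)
Your proposal is correct and follows essentially the same route as the paper. It shows, for every $\theta$, that the observed module of each $X_i \notin A$ is invariant via the Markov factorization; your upward-closure argument for $\anc(S)$ carries the same content as the paper's local-Markov step $X_i \perp S \mid \pa_{G_X}(X_i)$. For each $X_i \in A$, it then evaluates the difference of observed conditionals at a witness point $(x_i,a)$ in the common support, uses that a nonzero real analytic function has a Lebesgue-null zero set, and takes the finite union over $A$.
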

The conditions of this theorem generically hold for large classes of parametric families: exponential families, linear Gaussian SEMs, parametric additive noise models, generalized additive models, splines, neural networks with many common activation functions, normalizing flows, and many more.

In the following we therefore assume throughout that latent mechanism shifts induce the correct observed mechanism shifts.


\subsection{Ancestral Coverage and Minimal Explanations}
The previous theorem motivates using the graphical topology of sets of variables with correlated observed partitions. While there is no necessary topological relationship between variables whose mechanism shifts are correlated by latent confounding, in selection bias the set of variables whose partitions are correlated contains for any contained variable also all its ancestors.

\begin{definition}[Ancestral Coverage]
  Let $A \subseteq V$ be a subset of the nodes of the DAG $\mathcal{G}=(V,E)$. We define the \emph{ancestral coverage} of the subset $A$ as
  \[
    \cov_{\mathcal{G}}(A) := \frac{\bigl|A \bigr|} {\bigl|\anc_{\mathcal{G}}(A)\bigr|}\,.
  \]
  We call $A$ \emph{ancestrally closed} if $\cov_{\mathcal{G}_X}(A)=1$, i.e., $A = \anc(A)$.
\end{definition}
By Theorem~\ref{thm:generic-shift-faithfulness}, under sparse mechanism shifts, coverage $\cov(\Acc) \approx 1$ in the case of latent selection, but $\cov(\Acc) \ll 1$ in the case of latent confounding. 
\begin{restatable}[Ancestral Coverage of Confounding and Selection Effects]{proposition}{expectedcoverage}
  \label{lem:expected_coverage}
  Let $\Gt$ be a directed Erd\H{o}s-R\'enyi graph with independent edge probabilities $p$. Assume that the recovered set $\Acc$ is generated either by a single latent confounder or by a single latent selection variable.
  \begin{enumerate}
  \item If $\Acc$ is created by selection, then $\cov(\Acc) = 1$.
  \item If $\Acc$ is created by a confounder, then $\cov(\Acc) < 1$ with probability $1 - O(e^{-pk(d-k)/2})$.
  \end{enumerate}
\end{restatable}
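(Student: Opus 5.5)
We prove the two parts separately, in both cases invoking Theorem~\ref{thm:generic-shift-faithfulness} to identify $\Acc$ with the structural set $A$: $\Acc=\anc_{\Gt}(S)$ under selection and $\Acc=\ch_{\Gt}(Z)$ under confounding, for generic parameters. Here the coverage is computed in the observed graph $\mathcal{G}_X$. We read $\anc(\cdot)$ in the definition of coverage as the closed ancestral set $\anc^+(\cdot)$, as the identity $A=\anc(A)$ in the definition of ancestral closure forces.

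\textbf{Part 1 (selection).} Since $S$ is a sink with $\pat(S)\subseteq X$, every ancestor of $S$ is observed, so $\Acc=\anc_{\Gt}(S)\subseteq X$. We show $\anc^+_{\mathcal{G}_X}(\Acc)=\Acc$. Take $X_j\in\Acc$ and any ancestor $X_k$ of $X_j$ in $\mathcal{G}_X$. A directed path $X_k\to\cdots\to X_j$ followed by the path $X_j\to\cdots\to S$ witnesses $X_k\in\anc_{\Gt}(S)=\Acc$. Thus $\Acc$ is ancestrally closed and $\cov(\Acc)=|\Acc|/|\Acc|=1$. This part is deterministic and needs no randomness of the graph.

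\textbf{Part 2 (confounding).} Let $k=|\ch(Z)|=|\Acc|$. Condition on the set $K=\ch(Z)\subseteq X$ and use that the edges among $X$ in the Erd\H{o}s--R\'enyi model are independent of the edges out of $Z$, which is exogenous. Now $\cov(\Acc)=1$ exactly when $K$ is ancestrally closed in $\mathcal{G}_X$. That is equivalent to having no edge from any node of $X\setminus K$ into any node of $K$: any ancestor outside $K$ must reach $K$ through some last edge entering $K$ from outside. The $k(d-k)$ pairs $(u,v)$ with $u\notin K$, $v\in K$ each carry an edge with probability $p$. Under a random topological order, we argue that each pair is oriented $u\to v$ with probability $1/2$, giving effective probability at least $p/2$ per pair. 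Pairs with $u$ before $v$ in the order are fixed, so independence holds after conditioning on the order.

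The step expected to be hardest is this last bound. Conditional on a fixed ordering, closure fails unless every forward pair from $X\setminus K$ to $K$ is absent, with probability $(1-p)^{m}\le e^{-pm}$, where $m$ is the number of such forward pairs. Averaging over a uniformly random order, or over the random placement of $K$, we have $\mathbb{E}[m]=k(d-k)/2$. We bound $\mathbb{E}[e^{-pm}]$ either by concentration of $m$ around its mean, or directly by treating each of the $k(d-k)$ cross pairs as an independent Bernoulli$(p/2)$ event under the symmetric orientation model. Either route gives
\[
P\bigl(\cov(\Acc)=1\bigr)\le (1-p/2)^{k(d-k)}\le e^{-pk(d-k)/2}.
\]
Taking complements yields $\cov(\Acc)<1$ with probability $1-O(e^{-pk(d-k)/2})$. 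If $K$ happens to sit at the start of a fixed order, the bound degenerates. In that case we state the result for $K$ placed uniformly, which is the natural reading of a random graph with a random confounder. The $O(\cdot)$ absorbs the constant from the concentration step.
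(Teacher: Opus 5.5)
\textbf{Verdict.} Your Part 1, and your reduction in Part 2 (closure of $K=\ch(Z)$ in $\mathcal{G}_X$ holds iff there is no edge from $X\setminus K$ into $K$), are exactly the paper's argument. The paper then asserts that each of the $k(d-k)$ cross pairs carries an outside-to-inside edge with probability $p/2$ and multiplies these probabilities. That is, it treats the events as independent; your ``directly'' route is this same step. The gap sits at the step you flagged, and your repair does not close it.

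\textbf{Why the repair fails.} Suppose $\Gt$ is a genuine DAG built from a uniformly random order. Then
$P(\cov(\Acc)=1\mid|\Acc|=k)=\mathbb{E}[(1-p)^m]\ge P(m=0)=1/\binom{d}{k}$,
since with probability $1/\binom{d}{k}$ the set $K$ occupies the first $k$ positions. No concentration argument can remove this term, because $\mathbb{E}[e^{-pm}]$ is governed by the lower tail of $m$, not by its mean; Jensen's inequality also points the wrong way here. Concrete checks:
\begin{itemize}
\item For $k=1$ the exact value is $\frac{1-(1-p)^d}{pd}\sim\frac{1}{pd}$, which is not $O(e^{-p(d-1)/2})$.
\item Already for $d=3$, $k=1$ the exact value $1-p+p^2/3$ exceeds $(1-p/2)^2=1-p+p^2/4$.
\item In general the exact value is $\binom{d}{k}_{1-p}/\binom{d}{k}$, where $\binom{d}{k}_q=\prod_{i=1}^{k}\frac{1-q^{d-k+i}}{1-q^{i}}$ is the Gaussian binomial. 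For fixed $k$ this decays only polynomially in $d$.
\end{itemize}
So placing $K$ uniformly does not rescue the bound: the ``degenerate'' placement is exactly what produces the $1/\binom{d}{k}$ term. Nor can the $O(\cdot)$ absorb a ratio that is unbounded in $d$. Your statement that ``either route gives'' $(1-p/2)^{k(d-k)}$ is therefore false for the concentration route. The ``direct'' route is valid only under a different model.

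\textbf{What is needed.} You have two options.
\begin{itemize}
\item Commit explicitly to a model in which the cross pairs are independently present and independently oriented, so each outside-to-inside edge is an independent Bernoulli$(p/2)$. Then $(1-p/2)^{k(d-k)}\le e^{-pk(d-k)/2}$ is immediate and your proof coincides with the paper's. Be aware that this model does not generate DAGs, so it conflicts with the standing acyclicity assumption on $\Gt$.
\item Keep the random-order DAG model and change the rate. Using $\binom{d}{k}_{q}\le\prod_{i=1}^{k}(1-q^{i})^{-1}\le p^{-k}$ at $q=1-p$, you get $P(\cov(\Acc)<1\mid|\Acc|=k)\ge 1-\frac{1}{p^{k}\binom{d}{k}}$. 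This is a correct, but much weaker, statement than the one claimed.
\end{itemize}
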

Of course, it is always possible to construct adversarial examples in which the latent confounder $Z$ targets an ancestrally closed set. However, such explanations are not minimal in the following sense.
\begin{restatable}[Minimal Latent Variable Sets]{proposition}{lvminimality}\label{cor:lv_minimality}
  Let $\Acc \subseteq X$ be nonempty and ancestrally closed in $\Gt$. Assume that there exists at least one parent-child pair $X_i, X_j \in \Acc$. Then among all single-latent-node explanations of the correlated set $\Acc$, the edge-minimal explanation is that of a single selector $S$ with sink nodes of $\Acc$ as parents, $\pa(S) = \left\{ X_i \in \Acc : \ch(X_i) \cap X = \emptyset \right\}$.
\end{restatable}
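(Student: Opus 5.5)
Our plan is to compare the number of edges that each single-latent explanation must add to the observed graph $\mathcal{G}_X$ in order to reproduce the correlated set $\Acc$, using the correspondence of Theorem~\ref{thm:generic-shift-faithfulness}. In the generic setting, a single confounder $Z$ produces exactly $\Acc = \ch(Z)$, and a single selector $S$ produces exactly $\Acc = \anc(S)\cap X$.

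\emph{Confounder.} To explain $\Acc$ with a confounder, every $X_i \in \Acc$ needs its own edge $Z \to X_i$. The cost is therefore exactly $|\Acc|$ added edges.

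\emph{Selector.} For selection we need $\anc(S)\cap X = \Acc$. Let $\mathrm{Sink}(\Acc) = \{X_i\in\Acc : \ch(X_i)\cap X = \emptyset\}$, read as the sinks of $\Acc$ inside the ancestrally closed set.
\begin{itemize}
\item \emph{Sufficiency.} Setting $\pa(S)=\mathrm{Sink}(\Acc)$ gives $\anc(S)\cap X = \anc^+(\mathrm{Sink}(\Acc))$. Every node of a finite DAG restricted to $\Acc$ reaches some sink of $\Acc$ along a directed path, so $\Acc \subseteq \anc^+(\mathrm{Sink}(\Acc))$. Ancestral closure of $\Acc$ gives the reverse inclusion.
\item \emph{Necessity.} Any valid selector must have every sink of $\Acc$ as a parent. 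If a sink $X_i$ is not a parent of $S$, it can only be an ancestor of $S$ through a child of $X_i$. Any such child is then an ancestor of $S$, hence lies in $\Acc$, contradicting that $X_i$ is a sink of $\Acc$.
\item \emph{Uniqueness.} Any additional parent of $S$ is either already in $\Acc$ (a redundant edge) or outside $\Acc$ (which enlarges $\anc(S)$). So $\mathrm{Sink}(\Acc)$ is the unique edge-minimal selector parent set.
\end{itemize}

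\emph{Comparison.} The selector costs $|\mathrm{Sink}(\Acc)|$ edges and the confounder costs $|\Acc|$. A parent-child pair $X_i\to X_j$ inside $\Acc$ makes $X_i$ a non-sink, so $|\mathrm{Sink}(\Acc)| \le |\Acc|-1 < |\Acc|$. The selector explanation is therefore strictly edge-minimal among single-latent explanations.

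\emph{Main obstacle.} The hard step is making "sink nodes of $\Acc$" precise. The statement's formula $\ch(X_i)\cap X=\emptyset$ should be read relative to $\Acc$; otherwise $\pa(S)$ may fail to cover $\Acc$. We resolve this by noting that if $\Acc$ is all of $X$, or more generally ancestrally closed, the two readings yield the same $\anc(S)$ restricted to $\Acc$. We then argue with sinks of the induced subgraph on $\Acc$ throughout.
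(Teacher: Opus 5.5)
Your proof is correct and follows the paper's argument essentially step for step: sinks are taken relative to $\Acc$ (the paper's proof likewise silently defines $M$ via $\ch(X_i)\cap\Acc=\emptyset$), $\Acc=\anc^+(M)$ by ancestral closure, every valid selector parent set must contain $M$, and the parent--child pair gives $|M|<|\Acc|$ against the $|\Acc|$ latent edges a confounder needs. One correction to your closing remark: ancestral closure alone does not make the two readings of ``sink'' agree, because a node of $\Acc$ can still have an observed child outside $\Acc$ (for $X_1\to X_2\to X_3$ and $\Acc=\{X_1,X_2\}$ there are no global sinks, so the literal choice $\pa(S)=\emptyset$ explains nothing); the readings agree only when $\Acc$ is also closed under descendants (e.g.\ $\Acc=X$), so the relative reading is necessary for the statement to hold, not merely equivalent to the literal one.
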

Based on these results, we next introduce a practical algorithm which leverages the notion of ancestral coverage to distinguish confounding from selection bias in the finite data setting.


%% file: sections/algorithm.tex
\section{The \ourmethod Algorithm}

\begin{algorithm}[t]
	\caption{Structural Bias Identification -- $\ourmethod (\Gc)$}
    \label{alg:structural_bias}
    \Input{ Observed data over $\X, C$; thresholds $\tau_1, \tau_2$; observed causal graph $\Gc$.}
    \Output{ Subsets of $\X$ subject to selection bias respectively confounding.}  
    \BlankLine
    
    \ForEach{variable $X_i \in \X$}{
        Compute $p_{c,c'} = \pval \left( \Pc(X_i \mid \pa_i) \neq P^{c'}(X_i \mid \pa_i) \right)$ for all pairs $(c, c')$\; \label{line:pval}
        Convert $\{ p_{c,c'} \} $ to a partition $\Pi_i$ (grouping contexts by mechanism invariance)\; \label{line:partition}
    }
    
    \BlankLine
    Construct an affinity matrix $\ContingAB$ where $\ContingAB_{ij} = \text{AMI}(\Pi_i, \Pi_j)$\; \label{line:affinity}
    Find the set of connected components $\mathcal{K} = \{K_1, \dots, K_m\}$ in the graph induced by $\ContingAB$\; \label{line:components}
    Filter connected components of $\mathcal{K}$ such that $X_S = \{K_i \in \mathcal{K} : |K_i| > 1\}$\; \label{line:filtered_comp}
    \BlankLine
    \For{each subset $S \in X_S$}{
    $\bar{m}(S) \leftarrow \text{mean}(\{\text{AMI}(\Pi_i, \Pi_j) \mid i, j \in X_S, i < j\})$\; \label{line:internal_sync}
    \eIf{$\bar{m}(S) < \tau_1$}{
        Label $S$ as \textbf{Unbiased}\; \label{line:unbiased}
    }{
        \eIf{$\operatorname{cov}_{\mathcal{G}}(S) \geq \tau_2$}{
            Label $S$ as \textbf{Selection Bias}\; \label{line:sel}
        }{
            Label $S$ as \textbf{Confounding}\; \label{line:conf}
        }
    }
}
\Return{Set of subsets $X_S$ with assigned labels}\;
\end{algorithm}

In this section, we introduce the \ourmethod algorithm and give the pseudocode in Algorithm \ref{alg:structural_bias}. The algorithm takes as input the observed data over contexts, the observed causal graph $\Gc$, and thresholds $\tau_1$ and $\tau_2$. 
$\tau_1$ is a threshold for the mean of the pairwise Adjusted Mutual Information (AMI) values $\bar{m}(\cdot)$ for each detected subset, $S$ to detect the presence of a latent variable. We show how we calculate AMI values in Appx. \ref{sec:appx_alg}.
$\tau_2$ is a threshold for $\operatorname{cov}_{\mathcal{G}}$ to distinguish selection bias from hidden confounding. 
We arrive at the thresholds $0.1$ and $0.6$ for $\tau_1$ and $\tau_2$ respectively, as shown in Appx. Sec. \ref{sec:samp_comp}.

\paragraph{Detecting Causal Mechanism Shifts}
We perform a conditional independence test to detect mechanism changes, for each causal mechanism of an observed variable $X_i$ and each pair of environments (line \ref{line:pval}) as done by \citet{mameche2024identifying},  
resulting in the following p-values,
\begin{equation}
    p_{c,c'} = \text{p-val} \left( P^c(X_i \mid \text{pa}_i) \neq P^{c'}(X_i \mid \text{pa}_i) \right).
\end{equation}
We perform this independence test using the KCI test by \citet{zhang2012kernel} for all variables that have parents in $\mathcal{G}$. For marginal tests, we perform the Kolmogorov-smirnov test \citep{massey1951kolmogorov}.
Based on $p_{c,c'}$, we obtain partitions,$ \left\{ \Pi_i^{\text{obs}} \right\}_{i \in \{1, \dots, |V|\}}$, as defined in Section \ref{sec:setup}, that describe which observed variables have significant overlaps in their causal mechanism shifts (l.~\ref{line:partition}). 

\paragraph{Detecting presence of latent variables}
We take inspiration from \citet{mameche2024identifying} and construct an affinity matrix $\mathcal{M}$ using AMI scores between observed partitions as edge weights (l.  \ref{line:affinity}).
We extract non-trivial connected components from the resulting weighted graph by considering subsets with $\bar{m}(S) > \tau_1$, to identify node subsets jointly influenced by the same latent variable (l.  \ref{line:components}-\ref{line:filtered_comp}).
However, we use binary indicators of distribution shifts to build our partitions as opposed to \citet{mameche2024identifying} who cluster p-values to derive partitions and then detect the presence of a latent variable by doing spectral clustering on the affinity matrix obtained from the pairwise AMI scores instead. 

\paragraph{Identifying Structural Biases}
For each remaining subset, we check if checking if $\operatorname{cov}_{\mathcal{G}} < \tau_2$ and predict confounding if true, else selection bias (l. \ref{line:sel}-\ref{line:conf}).

%% file: sections/experiments.tex
\section{Experiments}
\label{sec:experiments}
In this section, we empirically evaluate \ourmethod on synthetic and real-world data. We focus on classifying between unbiasedness, confounding, and selection, and on how well the affected sets of nodes are recovered. We present additional results, including runtime analysis, in the Appendix.

\paragraph{\ourmethod.} We consider three main versions of \ourmethod. First, we consider \ourmethod\!($\Pi^\text{obs}$), which is given the \emph{true} partitions $\Pi^\text{obs}$ of the observed variables. Second, we consider \ourmethod\!($\mathcal{G^*}$), which is given the \emph{true} observed causal structure $\mathcal{G^*}$. 
Third, we evaluate \ourmethod\!($\hat{\mathcal{G}}$), which is given a \emph{learned} causal graph $\hat{\mathcal{G}}$ obtained using \topic~\citep{xu2025information}. 

In addition, to assess the robustness of \ourmethod against graph mis-specification, we also consider \ourmethod\!($\tilde{\mathcal{G}}$), which operates on a perturbed graph $\tilde{\mathcal{G}}$ generated by randomly flipping edges of the ground truth DAG. We provide details and the results for this variant in Appx.~\ref{sec:perturbed}.


\input{figs/tex_figs/main_ablation_new.tex}

\paragraph{Baselines} We compare \ourmethod against five strong baselines. 
FCI~\cite{spirtes:99:fci} can perform causal discovery in the presence of hidden confounding and selection bias. We consider FCI applied on data pooled across all contexts (FCI-P), as well as JCI-FCI~\cite{mooij2020joint}, which extends FCI to multi-context data. 
We also compare against CoCo~\cite{mameche2024identifying} as a method that can determine hidden confounding from mechanism changes. We include Latent-Selection~\citep{dai2025latent} which focuses on latent variable causal discovery under selection bias. We apply it both on pooled data (LS-P) as well as independently on each context and aggregating the results (LS-C).
For more details, see Appx. \ref{sec:ls_deets}.

\input{figs/tex_figs/baselines_new.tex}

\paragraph{Synthetic Data}
To evaluate with known ground truth, we consider synthetic data. We adopt the experimental setup of \citet{huang2020causal} and \citet{mameche2024identifying}. We generate synthetic data using an Erd\H{o}s-R\'enyi model over $m \in \{3, 5, 7, 9, 11, 13\}$ observed variables. The number of contexts $m_c$ is scaled as $2m$. 
We sample from the following structural equation model (SEM)
\begin{equation}
\label{eq:sem_eq}
    X_i^{(c)} = \sum_{j \in \mathrm{pa}(X_i)} w_{ij} \cdot f\!\left(X_j^{(c)}\right) 
    + \mu_i^{(c)} + N_i^{(c)} \; ,
\end{equation}
where the edge weights $w_{ij}$ are sampled from $\mathcal{U}(0.7, 1.3)$ for observed 
interactions and $\mathcal{U}(1.7, 2.3)$ for interactions involving latent variables. 
The functional forms $f$ are drawn uniformly at random from $\{x, x^2, x^3, \tanh, \text{sinc}\}$ 
and remain invariant across contexts. Mechanism shifts are introduced by re-sampling the 
bias term $\mu_i^{(c)}$ from a jittered discrete set $\{0, 5, 10\} \pm \mathcal{U}(-0.5, 0.5)$. 
The exogenous noise $N_i^{(c)}$ is drawn with equal probability from either a Gaussian 
distribution $\mathcal{N}(0, \sigma^2)$ or a Uniform distribution $\mathcal{U}(-\sigma\sqrt{3}, \sigma\sqrt{3})$ 
with $\sigma = 0.05$.

Each latent confounder $Z_i$ is modeled as a source node with edges to a random subset of 
at least two variables from a pool of nodes in the graph avoiding explicit ancestral closure. 
Selection bias is introduced via hidden colliders $S_i$, 
whose parents are a randomly selected subset of observed variables. The final dataset 
is filtered by retaining only samples where $S_i$ exceeds the $s^{\text{th}}$ percentile 
of its marginal distribution. We set $s = 85$ for our synthetic data, creating a strong selection bias. 

All data is normalized globally to zero mean and unit variance. 
We provide further details, along with an ablation study, and an analysis of runtimes in Appx.~\ref{sec:extra_details}.

\subsection{Detecting Structural Biases}
First, we are interested in understanding whether and how well \ourmethod can detect structural biases. To this end, we generate synthetic data as above, and consider \ourmethod\!($\Pi^\text{obs}$), \ourmethod\!($\mathcal{G^*}$), and \ourmethod\!($\tilde{\mathcal{G}}$). We give the results in Fig.~\ref{fig:oracles}. On the left-hand side, we report the $F1$ scores for classifying the type of structural bias (i.e. none, confounding, selection) resp. for determining which nodes are affected. \ourmethod\!($\Pi^\text{obs}$) shows our theory works; knowing the actual mechanism shifts, \ourmethod performs perfectly. More interestingly, \ourmethod given an inferred graph $\hat{\mathcal{G}}$ performs essentially as well as when given the true graph $\mathcal{G}^*$. The right-hand panel shows the AMI and coverage values. We see that classification performance goes hand in hand with how well partitions are recovered (top row), as well as that when more nodes share a confounder, it becomes harder to classify it from the case of selection with many ancestral nodes. 

Next, we evaluate how well \ourmethod performs compared to the baselines. To this end we consider \ourmethod given graph $\hat{\mathcal{G}}$ inferred using \topic~\cite{xu2025information}, JCI-FCI and FCI-P given the causal skeletons, CoCo given the ground truth graph $\mathcal{G^*}$, and LS-P and LS-C. 
We are interested in (i) how well can each identify the presence of latent variables, (ii) identify the affected variables, and (iii) how well can they identify the type of structural bias. We run each method on the same data as above, and present the results in Fig.~\ref{fig:baselines}. We see that for each of the three tasks, \ourmethod outperforms all baselines by a large margin. LS-P and LS-C do not do very well, probably because of their parametric and structural assumptions and model misspecification. CoCo is the best performing competitor, and the only one that improves in subset recovery as the DAG size increases. \ourmethod also outperforms the baselines in terms of the accuracy, precision, and recall. We give those results in Appx. Fig.~\ref{fig:baseline_metrics}.

\subsection{Case Study -- Cell Signaling}
To see how well \ourmethod performs in a real-world settings, we consider the (\textit{infamous}) \citet{sachs2005causal} flow cytometry dataset. The data contains samples of eleven protein and phospholipid components in human immune cells that were studied under different molecular interventions. We show the consensus causal graph for the Sachs dataset in Fig.~\ref{fig:sachs-scenarios}a. When we run \ourmethod, we find dependencies in the mechanism shifts of proteins Akt and Mek, PKA and PKC, and Plcg and P38. This aligns with documented signaling crosstalk and shared regulatory architectures \citep{mendoza2011ras, nishizuka1995protein, canovas2021diversity}. We give more details in Section \ref{sec:extra_sachs}.

\input{figs/tex_figs/sachs.tex}

For the main experiment, we focus on the PKC node, as it possesses a sufficient number of descendants and a complex upstream topology beyond its immediate parents. Following the design of~\citet{mameche:23:linc}, we induce hidden confounding by dropping variable PKC from the dataset, and selection bias by first filtering the data to retain only samples where PKC exceeds the 80th percentile of its marginal distribution, and then dropping it. We consider \ourmethod and CoCo given the consensus causal network. The other baselines do not accept a causal graph as input, so we let those infer it themselves. We run each method on the unbiased data, data where we induce confounding, and on data where we induce selection. We present the results of \ourmethod in Fig.~\ref{fig:sachs-scenarios} and of the baselines in Appx.~Fig.~\ref{fig:sachs-baselines}. 


\ourmethod correctly classifies confounding and selection bias in their respective cases. In the confounded setting, \ourmethod correctly infers three confounded edges (Jnk, PKA, Raf), reports two edges to causal descendants (Mek, AKT) that according to \emph{another} consensus graph by~\citet{meinshausen2016methods} are in fact correct, and misses only one edge (P38). In the case of selection bias, it correctly retrieves six out of eight ancestors, additionally recovering edges from Jnk, P38, and AKT that are correct compared to the graph by~\citet{meinshausen2016methods} that includes cycles are at least partly correct. Of the competitors, CoCo performs well on the confounded setting, but fails to recover accurate subsets in the case of selection bias. JCI-FCI recovers in both cases only two indirectly related nodes (Erk and Akt). LS-C in both cases incorrectly returns that all nodes in the graph are affected.  

In summary, \ourmethod performs well, even on real-world data on which our assumptions are unlikely to hold. It correctly determines unbiasedness, confounding, and selection bias, and recovers many of the truly affected nodes.



%% file: figs/tex_figs/main_ablation_new.tex
\begin{figure*}[t] 
    \centering
    \renewcommand{\prtickfontsize}{\tiny}
    \renewcommand{\prlabelfontsize}{\tiny}
    \renewcommand{\prlegendfontsize}{\tiny}

    \tcbset{
        mybox/.style={
            enhanced,
            colback=gray!2,
            colframe=gray!20,
            arc=2pt,
            boxrule=0.5pt,
            left=1pt, right=1pt, top=4pt, bottom=4pt,
            boxsep=0pt
        }
    }

    \pgfplotsset{
        base style/.style={
            pretty line,
            scale only axis,
            width=2.0cm,
            height=1.2cm,
            unbounded coords=discard,
            xmin=3, xmax=13, 
            enlargelimits=false,
            xtick={3, 5, 7, 9, 11, 13},
            xlabel style={font=\prlabelfontsize, at={(ticklabel cs:0.5)}, anchor=north, yshift=-2pt},
            y label style={font=\prlabelfontsize, at={(ticklabel cs:0.5)}, anchor=south, yshift=-2pt},
            tick label style={font=\prtickfontsize},
            clip=true,
            restrict x to domain=3:13,
            mark size=0.7pt,
            title style={font=\tiny, yshift=-2pt}, 
        },
        no x/.style={xlabel={}, xticklabels={}},
        no y/.style={ylabel={}, yticklabels={}}
    }

    \newcommand{\addoracleplot}[5]{
        \addplot[#1, name path=U, draw=none, forget plot] table[x=X, y expr=\thisrow{#3}+\thisrow{#3_std}, col sep=comma] {#4};
        \addplot[#1, name path=D, draw=none, forget plot] table[x=X, y expr=\thisrow{#3}-\thisrow{#3_std}, col sep=comma] {#4};
        \addplot[#1, fill opacity=0.1, forget plot] fill between [of=U and D];
        \addplot[thick, #1, mark=#2] table[x=X, y=#3, col sep=comma] {#4};
        \ifx\&#5\&\else\addlegendentry{#5}\fi
    }

    \def\colPartitions{pr-color-purple} 
    \def\colTOPIC{pr-color1a}       
    \def\colFull{pr-color3a}      

    \renewcommand{\arraystretch}{0}
    \setlength{\tabcolsep}{0.1pt}
    \begin{tabular}{@{} p{4.2cm} p{10cm} @{}}
        
        \begin{tcolorbox}[mybox, width=3.8cm]
            \centering
            \begin{tabular}{c}
                \begin{tikzpicture}
                    \begin{axis}[base style, no x,  title={Overall}, ylabel={Class. $F_1$}, ymin=0, ymax=1.1,
                        legend columns=-1, legend to name=LegOracles]
                        \def\f{figs/csv_for_figs/main_ablations/main_f1.csv}
                        \addoracleplot{\colPartitions}{square*}{partitions}{\f}{\ourmethod\!($\Pi^\text{obs}$)}
                        \addoracleplot{\colFull}{triangle*}{full}{\f}{\ourmethod\!($\mathcal{G}^*$)}
                        \addoracleplot{\colTOPIC}{*}{TOPIC}{\f}{\ourmethod\!($\hat{\mathcal{G}}$)}
                    \end{axis}
                \end{tikzpicture} \\
                \vspace{0.3cm} \\
                \begin{tikzpicture}
                    \begin{axis}[base style, ylabel={Subset $F_1$}, ymin=0, ymax=1.1, xlabel={Dag Size $\lvert\mathcal{G}\rvert$}]
                        \def\f{figs/csv_for_figs/main_ablations/main_subset.csv}
                        \addoracleplot{\colPartitions}{square*}{partitions}{\f}{}
                        \addoracleplot{\colFull}{triangle*}{full}{\f}{}
                        \addoracleplot{\colTOPIC}{*}{TOPIC}{\f}{}
                    \end{axis}
                \end{tikzpicture}
            \end{tabular}
        \end{tcolorbox}
        &
        \begin{tcolorbox}[mybox, width=10.5cm]
            \centering
            \begin{tabular}{c c c}
                \begin{tikzpicture}
                    \begin{axis}[base style, no x, title={Unbiased}, ylabel={AMI}, ymin=0, ymax=1.1]
                        \def\f{figs/csv_for_figs/main_ablations/main_ami.csv}
                        \addoracleplot{\colPartitions}{square*}{none_partitions}{\f}{}
                        \addoracleplot{\colFull}{triangle*}{none_full}{\f}{}
                        \addoracleplot{\colTOPIC}{*}{none_TOPIC}{\f}{}
                    \end{axis}
                \end{tikzpicture} &
                \begin{tikzpicture}
                    \begin{axis}[base style, no x, no y, title={Confounding}, ymin=0, ymax=1.1]
                        \def\f{figs/csv_for_figs/main_ablations/main_ami.csv}
                        \addoracleplot{\colPartitions}{square*}{confounder_partitions}{\f}{}
                        \addoracleplot{\colFull}{triangle*}{confounder_full}{\f}{}
                        \addoracleplot{\colTOPIC}{*}{confounder_TOPIC}{\f}{}
                    \end{axis}
                \end{tikzpicture} &
                \begin{tikzpicture}
                    \begin{axis}[base style, no x, no y, title={Selection Bias}, ymin=0, ymax=1.1]
                        \def\f{figs/csv_for_figs/main_ablations/main_ami.csv}
                        \addoracleplot{\colPartitions}{square*}{collider_partitions}{\f}{}
                        \addoracleplot{\colFull}{triangle*}{collider_full}{\f}{}
                        \addoracleplot{\colTOPIC}{*}{collider_TOPIC}{\f}{}
                    \end{axis}
                \end{tikzpicture} \\
                \vspace{0.3cm} \\
                \begin{tikzpicture}
                    \begin{axis}[base style, ylabel={Coverage}, ymin=0, ymax=1.1, xlabel={Dag Size $\lvert\mathcal{G}\rvert$}]
                        \def\f{figs/csv_for_figs/main_ablations/main_coverage.csv}
                        \addoracleplot{\colPartitions}{square*}{none_partitions}{\f}{}
                        \addoracleplot{\colFull}{triangle*}{none_full}{\f}{}
                        \addoracleplot{\colTOPIC}{*}{none_TOPIC}{\f}{}
                    \end{axis}
                \end{tikzpicture} &
                \begin{tikzpicture}
                    \begin{axis}[base style, no y, ymin=0, ymax=1.1, xlabel={Dag Size $\lvert\mathcal{G}\rvert$}]
                        \def\f{figs/csv_for_figs/main_ablations/main_coverage.csv}
                        \addoracleplot{\colPartitions}{square*}{confounder_partitions}{\f}{}
                        \addoracleplot{\colFull}{triangle*}{confounder_full}{\f}{}
                        \addoracleplot{\colTOPIC}{*}{confounder_TOPIC}{\f}{}
                    \end{axis}
                \end{tikzpicture} &
                \begin{tikzpicture}
                    \begin{axis}[base style, no y, ymin=0, ymax=1.1, xlabel={Dag Size $\lvert\mathcal{G}\rvert$}]
                        \def\f{figs/csv_for_figs/main_ablations/main_coverage.csv}
                        \addoracleplot{\colPartitions}{square*}{collider_partitions}{\f}{}
                        \addoracleplot{\colFull}{triangle*}{collider_full}{\f}{}
                        \addoracleplot{\colTOPIC}{*}{collider_TOPIC}{\f}{}
                    \end{axis}
                \end{tikzpicture}
            \end{tabular}
        \end{tcolorbox}
    \end{tabular}

    \vspace{1.0em}
    \centerline{\ref{LegOracles}}

    \caption{\figbf{Structural Bias Detection with \ourmethod.} Given are $F1$-scores for bias classification and subset recovery (left) and the mean pairwise AMI and Coverage values for the unbiased, confounding, and the selection bias cases (right). The values for the AMI and Coverage are low for unbiased subsets, low and high respectively for confounded cases, and high for both for the selection cases. 
    }
    \label{fig:oracles}
\end{figure*}

%% file: figs/tex_figs/baselines_new.tex
\begin{figure*}[t] 
    \centering
    \renewcommand{\prtickfontsize}{\tiny}
    \renewcommand{\prlabelfontsize}{\tiny}
    \renewcommand{\prlegendfontsize}{\tiny}

    \pgfplotsset{
        every axis/.append style={
            pretty line,
            scale only axis,       
            width=0.6\linewidth,  
            height=1.4cm,
            unbounded coords=discard,
            xmin=3, xmax=13, 
            enlargelimits=false,
            xtick={3, 5, 7, 9, 11, 13},
            xlabel={Dag Size $|\mathcal{G}|$},
            xlabel style={at={(ticklabel cs:0.5)}, anchor=north, yshift=-4pt},
            y label style={font=\prlabelfontsize},
            x label style={font=\prlabelfontsize},
            clip=true,
            restrict x to domain=3:15,
            mark size=0.7pt,       
            title={}, 
        }
    }

    \newcommand{\addBaselineEnvelope}[1]{
        \addplot[gray, name path=EnvU, draw=none, forget plot] table[x=X, 
            y expr={max(\thisrow{coco}+\thisrow{coco_std}, 
                   max(\thisrow{fci-jci}+\thisrow{fci-jci_std}, 
                   max(\thisrow{fci-pooled}+\thisrow{fci-pooled_std}, 
                   max(\thisrow{dai_pooled}+\thisrow{dai_pooled_std}, 
                       \thisrow{dai_context}+\thisrow{dai_context_std}))))}, 
            col sep=comma] {#1};
        \addplot[gray, name path=EnvD, draw=none, forget plot] table[x=X, 
            y expr={min(\thisrow{coco}-\thisrow{coco_std}, 
                   min(\thisrow{fci-jci}-\thisrow{fci-jci_std}, 
                   min(\thisrow{fci-pooled}-\thisrow{fci-pooled_std}, 
                   min(\thisrow{dai_pooled}-\thisrow{dai_pooled_std}, 
                       \thisrow{dai_context}-\thisrow{dai_context_std}))))}, 
            col sep=comma] {#1};
        \addplot[gray!30, fill opacity=0.4, forget plot] fill between [of=EnvU and EnvD];
    }

    \newcommand{\addbaselineplot}[6]{
        \addplot[#1, name path=U, draw=none, forget plot] table[x=X, y expr=\thisrow{#3}+\thisrow{#3_std}, col sep=comma] {#4};
        \addplot[#1, name path=D, draw=none, forget plot] table[x=X, y expr=\thisrow{#3}-\thisrow{#3_std}, col sep=comma] {#4};
        \addplot[#1, fill opacity=#6, forget plot] fill between [of=U and D];
        \addplot[thick, #1, mark=#2] table[x=X, y=#3, col sep=comma] {#4};
        \ifx\&#5\&\else\addlegendentry{#5}\fi
    }

    \def\colFull{pr-color1a}       
    \def\colCoCo{pr-color1c}       
    \def\colFCIJ{pr-color2a}       
    \def\colFCIP{pr-color1b}       
    \def\colLSP{pr-color3a}        
    \def\colLSC{pr-color-gray4}    

    \begin{minipage}{0.32\textwidth}
        \centering
        \begin{tikzpicture}
            \begin{axis}[title = {\tiny Latent Discovery}, ylabel={$F_1$}, ymin=0, ymax=1.1]
                \def\f{figs/csv_for_figs/baseline_plots/baseline_f1_nc.csv}
                \addBaselineEnvelope{\f}
                \addbaselineplot{\colFull}{*}{topic}{\f}{}{0.15} 
                \addbaselineplot{\colCoCo}{square*}{coco}{\f}{}{0.0}
                \addbaselineplot{\colFCIJ}{triangle*}{fci-jci}{\f}{}{0.0}
                \addbaselineplot{\colFCIP}{diamond*}{fci-pooled}{\f}{}{0.0}
                \addbaselineplot{\colLSP}{*}{dai_pooled}{\f}{}{0.0}
                \addbaselineplot{\colLSC}{x}{dai_context}{\f}{}{0.0}
            \end{axis}
        \end{tikzpicture}
    \small (a)
    \end{minipage}\hfill
    \begin{minipage}{0.32\textwidth}
        \centering
        \begin{tikzpicture}
            \begin{axis}[title = {\tiny Subset Recovery}, ylabel={$F_1$}, ymin=0, ymax=1.1]
                \def\f{figs/csv_for_figs/baseline_plots/baseline_subset.csv}
                \addBaselineEnvelope{\f}
                \addbaselineplot{\colFull}{*}{topic}{\f}{}{0.15}
                \addbaselineplot{\colCoCo}{square*}{coco}{\f}{}{0.0}
                \addbaselineplot{\colFCIJ}{triangle*}{fci-jci}{\f}{}{0.0}
                \addbaselineplot{\colFCIP}{diamond*}{fci-pooled}{\f}{}{0.0}
                \addbaselineplot{\colLSP}{*}{dai_pooled}{\f}{}{0.0}
                \addbaselineplot{\colLSC}{x}{dai_context}{\f}{}{0.0}
            \end{axis}
        \end{tikzpicture}
    \small (b)
    \end{minipage}\hfill
    \begin{minipage}{0.32\textwidth}
        \centering
        \begin{tikzpicture}
            \begin{axis}[title = {\tiny Structural Bias Classification}, ylabel={$F_1$}, ymin=0, ymax=1.1,
                legend columns=-1, legend to name=LegBaselines]
                \def\f{figs/csv_for_figs/baseline_plots/baseline_f1.csv}
                \addBaselineEnvelope{\f}
                \addbaselineplot{\colFull}{*}{topic}{\f}{\ourmethod\!($\hat{\mathcal{G}}$)}{0.15}
                \addbaselineplot{\colCoCo}{square*}{coco}{\f}{CoCo($\mathcal{G}^*$)}{0.0}
                \addbaselineplot{\colFCIJ}{triangle*}{fci-jci}{\f}{FCI-JCI}{0.0}
                \addbaselineplot{\colFCIP}{diamond*}{fci-pooled}{\f}{FCI-P}{0.0}
                \addbaselineplot{\colLSP}{*}{dai_pooled}{\f}{LS-P}{0.0}
                \addbaselineplot{\colLSC}{x}{dai_context}{\f}{LS-C}{0.0}
            \end{axis}
        \end{tikzpicture}
    \small (c)
    \end{minipage}

    \vspace{1.5em}
    \centerline{\ref{LegBaselines}}
    \caption{Macro-$F_1$ scores for (a) latent discovery, (b) subset recovery, and (c) structural bias classification. The shaded gray region illustrates the aggregate performance with maximum error bars across all baseline methods for the sake of readability.
    We include all error bars in Appx. Figure \ref{fig:baselines_errors}.} 
    \label{fig:baselines}
\end{figure*}

%% file: figs/tex_figs/sachs.tex
\begin{figure*}[t]
  \centering

  \begin{minipage}{0.33\linewidth}
    \centering
    \scalebox{0.60}{
    \begin{tikzpicture}
      \node [lilacgray] (pkc)  {\textbf{PKC}};
      \node [below left = 1cm and 2cm of pkc] (plcg) {Plc${}_{\gamma}$};
      \node [below = 1cm of plcg] (pip2) {PIP2};
      \node [below right = 0.2cm and 0.3cm of pip2] (pip3) {PIP3};
      \node [below = 1cm of pkc] (pka) {PKA};
      \node [right = 1cm of pka] (raf) {Raf};
      \node [below = 1cm of raf] (mek) {Mek};
      \node [below = 1cm of mek] (erk) {Erk};
      \node [below = 1cm of pka] (p38) {P38};
      \node [below = 2.7cm of pka] (akt) {AKT};
      \node [below left = 1.5cm and 0.5cm of pkc] (jnk) {Jnk};

      \path[-{latex[]}, thick]
      (pkc) edge[lilacgray!70] (pka) (pkc) edge[lilacgray!70] (jnk)
      (pkc) edge[lilacgray!70] (raf) (pkc) edge[lilacgray!70,bend left=30] (p38)
      (pip2) edge[lilacgray!70, bend left=10] (pkc) (plcg) edge (pip2) (plcg) edge (pka)
      (pip3) edge (jnk) (pip3) edge (pka) (pip3) edge (p38) (pip3) edge (mek) (pip3) edge (akt)
      (pka) edge[bend left=10] (erk) (raf) edge (mek) (mek) edge (erk)
      (erk) edge (akt) (erk) edge (p38) (erk) edge[bend left=50] (pip2);
    \end{tikzpicture}
    }\\[1ex] 
    \small (a) Unbiased Consensus 
  \end{minipage}\hfill
  \begin{minipage}{0.33\linewidth}
    \centering
    \scalebox{0.60}{
    \begin{tikzpicture}
      \node [green(ryb)] (pkc)  {\textbf{Z (PKC)}};
      \node [below left = 1cm and 2cm of pkc] (plcg) {Plc${}_{\gamma}$};
      \node [below = 1cm of plcg] (pip2) {PIP2};
      \node [below right = 0.2cm and 0.3cm of pip2] (pip3) {PIP3};
      \node [below = 1cm of pkc] (pka) {PKA};
      \node [right = 1cm of pka] (raf) {Raf};
      \node [below = 1cm of raf] (mek) {Mek};
      \node [below = 1cm of mek] (erk) {Erk};
      \node [below = 1cm of pka] (p38) {P38};
      \node [below = 2.7cm of pka] (akt) {AKT};
      \node [below left = 1.5cm and 0.5cm of pkc] (jnk) {Jnk};

      \path[-{latex[]}, line width=1.2pt, green(ryb)]
      (pkc) edge (pka)
      (pkc) edge (jnk)
      (pkc) edge (raf);
    
      \path[-{latex[]}, line width=1.2pt, green(ryb)]
      (pkc) edge[bend left=20,dashed] (mek)
      (pkc) edge[bend right=25,dashed] (akt);
      
      \path[-{latex[]}, dotted, thin]
      (plcg) edge (pip2) (plcg) edge (pka)
      (pip2) edge[bend left=10] (pkc)
      (pip3) edge (jnk) (pip3) edge (pka) (pip3) edge (p38) (pip3) edge (mek) (pip3) edge (akt)
      (pka) edge[bend left=10] (erk) (raf) edge (mek) (mek) edge (erk)
      (erk) edge (akt) (erk) edge (p38) (erk) edge[bend left=50] (pip2);
    \end{tikzpicture}
    }\\[1ex]
    \small (b) Confounding case 
  \end{minipage}\hfill
  \begin{minipage}{0.33\linewidth}
    \centering
    \scalebox{0.60}{
    \begin{tikzpicture}
      \node [green(ryb)] (pkc) {\textbf{S (PKC)}};
      \node [below left = 1cm and 2cm of pkc] (plcg) {Plc${}_{\gamma}$};
      \node [below = 1cm of plcg] (pip2) {PIP2};
      \node [below right = 0.2cm and 0.3cm of pip2] (pip3) {PIP3};
      \node [below = 1cm of pkc] (pka) {PKA};
      \node [right = 1cm of pka] (raf) {Raf};
      \node [below = 1cm of raf] (mek) {Mek};
      \node [below = 1cm of mek] (erk) {Erk};
      \node [below = 1cm of pka] (p38) {P38};
      \node [below = 2.7cm of pka] (akt) {AKT};
      \node [below left = 1.5cm and 0.5cm of pkc] (jnk) {Jnk};

      \path[-{latex[]}, line width=1.2pt, green(ryb)]
      (pka) edge (pkc)
      (jnk) edge[dotted,orange] (pkc)
      (pip3) edge (pkc)
      (raf) edge[bend right=15] (pkc)
      (p38) edge[dotted,orange,bend right=30] (pkc)
      (plcg) edge[bend left=20] (pkc)
      (akt) edge[bend left=40,dotted,orange] (pkc)
      (mek) edge[bend left=25] (pkc);

      \path[-{latex[]}, dotted, thin]
      (erk) edge (p38) (pip2) edge[bend left=20] (pkc)
      (pip3) edge (pka) (pip3) edge (p38) (pip3) edge (mek)
      (pka) edge[bend left=20] (erk)
      (plcg) edge (pip2) (plcg) edge (pka) (pip3) edge (jnk) (pip3) edge (akt)
      (raf) edge (mek) (mek) edge (erk) (erk) edge (akt)
      (erk) edge[bend left=50] (pip2);
    \end{tikzpicture}
    }\\[1ex]
    \small (c) Selection Bias case
  \end{minipage}

  \caption{ \ourmethod correctly recovers hidden confounding resp. selection bias on the \citet{sachs2005causal} data. (a) Unbiased consensus graph. (b) \ourmethod correctly determines hidden confounding and recovers three directly and two indirectly confounded nodes when PKC is withheld. (c) \ourmethod correctly determines selection bias and recovers six out of eight true ancestors when withholding and selecting on PKC. Green lines are correct (solid) resp. on a causal path (dashed), while orange lines are false positives (dotted). }
  \label{fig:sachs-scenarios}
\end{figure*}

%% file: sections/discussion_conclusion.tex
\section{Discussion \& Conclusion}

We studied the problem of identifying structural biases from observational data. In particular, we showed under which conditions it is possible to infer the presence of latent confounding or selection bias on latent colliders, given data from multiple contexts by detecting dependent mechanism changes. To tell which nodes are subject to what kind of bias in practice, we proposed the \ourmethod algorithm. Through an extensive set of experiments, including a case study on cell-signalling data, we showed that \ourmethod performs well in practice, outperforming the state of the art in detecting confounding, selection bias, and classifying between them, even when our assumptions do not strictly hold. 

\paragraph{Limitations} As with all approaches in causal inference, we too need to make assumptions on how the world works. While we relax the assumptions of sufficiency and i.i.d.-ness, we do require the Markov condition and faithfulness. This means that for systems with cyclic dependencies, our guarantees do not hold. We see this as an important direction for future work. A key assumption in our work is that mechanism changes are sparse and independent. While generally reasonable, this will not always hold in practice, and it will be interesting to generalize our theory given (limited) knowledge about intervention targets, or by including recent results on identifying intervention targets. A practical limitation is that \ourmethod relies on a conditional independence test, and therewith requires sufficiently many i.i.d. samples per context, and similarly, to determine independence of mechanism shifts, requires data from sufficiently many contexts. It will be interesting to extend our theory and method to time-series data, where contexts can possibly be replaced with regimes.

%% file: sections/appendix.tex
\section{Proofs}

Throughout this section, let $G_X$ denote the observed subgraph induced by $\X$, and write $\anc_{G_X}^+(A)=\anc_{G_X}(A)\cup A$ for observed ancestors including the set itself.
For fixed contexts $c,c'$, define the observed local module
\[
    Q_i^c(x_i\mid a;\theta)
    :=P_{\mathrm{obs}}^c\bigl(x_i\mid \pa_{G_X}(X_i)=a;\theta\bigr).
\]
We use the standard analytic zero-set fact that a real analytic function on an open connected parameter set is either identically zero or has a Lebesgue-null zero set.

\subsubsection*{Proof of Theorem~\ref{thm:generic-shift-faithfulness}}
\label{thm:theoretical_identifiability_proof}
\theoreticalidentifiability*

\begin{proof}
Fix the two contexts $c,c'$. We prove two statements: nodes outside the set $A$ have invariant observed local modules for all parameter values, while nodes in $A$ have changed observed local modules except on a null parameter set.

\paragraph{Latent confounding.}
There is no selection, so $P_{\mathrm{obs}}^c=P^c$, and the only changed intrinsic mechanism is the exogenous law of the single latent confounder $Z$. If $X_i\notin \mathrm{ch}_{G^*}(Z)$, then $Z$ is not a parent of $X_i$. Since all other parents of $X_i$ are observed and all observed mechanisms are unchanged, the Markov factorization gives
\[
    Q_i^c(\cdot\mid \pa_{G_X}(X_i);\theta)
    =Q_i^{c'}(\cdot\mid \pa_{G_X}(X_i);\theta)
\]
for every $\theta\in\Theta$. Equivalently, any downstream effect of the shifted $Z$ reaches $X_i$ through observed parents and is blocked by conditioning on them.

If $X_i\in \mathrm{ch}_{G^*}(Z)$, then the latent parent is marginalized out of the observed local module. For parent values $a$ with positive probability,
\[
    Q_i^c(x_i\mid a;\theta)
    =\int p_\theta(x_i\mid a,z)\,dP_\theta^c(z\mid a),
\]
with the same conditional kernel $p_\theta(x_i\mid a,z)$ in both contexts and a context-dependent mixing law for $Z$. By assumption, for this $X_i$ there is some $\theta_0\in\Theta$ at which the two observed local modules differ. Hence, for some $(x_i,a)$ on the common support,
\[
    f_i(\theta):=Q_i^c(x_i\mid a;\theta)-Q_i^{c'}(x_i\mid a;\theta)
\]
is a real analytic function that is not identically zero. Its zero set is therefore Lebesgue null. Thus $X_i$ is in $A_{c,c'}$ for all $\theta$ outside a null set.

\paragraph{Selection.}
Now suppose the single latent variable is a selector $S$. Then there is no latent confounder, all parents of observed variables are observed, and only the selection mechanism changes. If $X_i\notin \anc_{G^*}(S)$, then $S$ is a non-descendant of $X_i$. By the local Markov property,
$X_i\perp S\mid \pa_{G_X}(X_i)$, and therefore also $X_i\perp R(S)\mid \pa_{G_X}(X_i)$. Hence
\[
    Q_i^c(\cdot\mid \pa_{G_X}(X_i);\theta)
    =P^c(\cdot\mid \pa_{G_X}(X_i);\theta)
    =P^{c'}(\cdot\mid \pa_{G_X}(X_i);\theta)
    =Q_i^{c'}(\cdot\mid \pa_{G_X}(X_i);\theta),
\]
where the middle equality uses that no observed intrinsic mechanism changes.

If $X_i\in \anc_{G^*}(S)$, Bayes' rule gives, for any observed parent value $a$ with positive probability,
\[
    Q_i^c(x_i\mid a;\theta)
    =P^c(x_i\mid a,R=1;\theta)
    =\frac{P^c(R=1\mid x_i,a;\theta)P^c(x_i\mid a;\theta)}
           {P^c(R=1\mid a;\theta)}.
\]
The factor $P^c(x_i\mid a;\theta)$ is unchanged across $c,c'$, whereas the selection likelihood changes with the mechanism of $S$. The theorem assumes that this reweighting is nontrivial for at least one $\theta_0$. As above, analyticity then implies that equality of $Q_i^c$ and $Q_i^{c'}$ holds only on a Lebesgue-null subset of $\Theta$.

There are finitely many observed variables. Taking the union of the null exceptional sets over all $X_i\in A$ is still null, while all $X_i\notin A$ are invariant for every $\theta$. Therefore $A_{c,c'}=A$ for all $\theta$ outside a Lebesgue-null subset of $\Theta$.
\end{proof}

\subsubsection*{Proof of Proposition~\ref{lem:expected_coverage}}
\label{lem:expected_coverage_proof}
\expectedcoverage*

\begin{proof}
Let $A=A_{c,c'}$, $k=|A|$, and $d=|\X|$. We write
\[
    \mathrm{cov}_{G_X}(A)=\frac{|A|}{|\anc_{G_X}^+(A)|}.
\]

If $A$ is created by selection, Theorem~\ref{thm:generic-shift-faithfulness} gives $A=\anc_{G^*}(S)\cap\X$, the observed ancestors of the selector. Any observed ancestor of a node in $A$ is again an observed ancestor of $S$, so $\anc_{G_X}^+(A)=A$ and hence $\mathrm{cov}_{G_X}(A)=1$.

If $A$ is created by a latent confounder, Theorem~\ref{thm:generic-shift-faithfulness} gives $A=\mathrm{ch}_{G^*}(Z)$. For $A$ to be ancestrally closed, no observed node outside $A$ may be an observed ancestor of any node in $A$; in particular, there can be no directed edge from $\X\setminus A$ into $A$. In the standard directed Erd\H{o}s--R\'enyi DAG model, each cross pair contributes such an outside-to-inside edge with probability $p/2$. Therefore
\[
    \mathbb{P}\bigl(A\text{ has no outside parent}\mid |A|=k\bigr)
    \leq (1-p/2)^{k(d-k)}
    \leq \exp\{-pk(d-k)/2\}.
\]
On the complementary event, $\anc_{G_X}^+(A)$ strictly contains $A$, so $\mathrm{cov}_{G_X}(A)<1$. Thus
\[
    \mathbb{P}\bigl(\mathrm{cov}_{G_X}(A)<1\mid |A|=k\bigr)
    \geq 1-\exp\{-pk(d-k)/2\}
    =1-O(e^{-pk(d-k)/2}),
\]
which proves the claimed bound.
\end{proof}

\subsubsection*{Proof of Proposition~\ref{cor:lv_minimality}}
\lvminimality*

\begin{proof}
Let $A=A_{c,c'}$. The sink nodes of $A$ are the maximal nodes of the subgraph induced by $A$,
\[
    M:=\{X_i\in A: \mathrm{ch}_{G_X}(X_i)\cap A=\emptyset\}.
\]
Because $A$ is ancestrally closed, every node in $A$ is an ancestor of some node in $M$, and every observed ancestor of a node in $M$ lies in $A$. Hence
\[
    A=\anc_{G_X}^+(M).
\]
A single selector $S$ with $\pa(S)=M$ therefore generates exactly the observed shifted set $A$ by Theorem~\ref{thm:generic-shift-faithfulness}, using $|M|$ edges into the latent node.

No single-selector explanation can use fewer such edges. Indeed, suppose another selector parent set $B$ also satisfies $\anc_{G_X}^+(B)=A$. For any $m\in M$, since $m\in A=\anc_{G_X}^+(B)$, either $m\in B$ or $m$ is an ancestor of some $b\in B$. In the latter case $b\in A$ is a descendant of $m$; maximality of $m$ in $A$ forces $b=m$. Thus $m\in B$, so $M\subseteq B$.

A single-confounder explanation is local by Theorem~\ref{thm:generic-shift-faithfulness}: to generate exactly $A$, the latent confounder must be a parent of every node in $A$, and hence uses $|A|$ latent edges. Since $A$ contains at least one parent-child pair, not every node of $A$ is maximal, so $|M|<|A|$. Therefore the selector with parent set $M$ is edge-minimal among all single-latent-node explanations.
\end{proof}

\paragraph{Coverage bound used for thresholding.}
If a confounded set $A$ has size $k$ and is not ancestrally closed, then $\anc_{G_X}^+(A)$ contains at least one node outside $A$. Consequently,
\[
    \mathrm{cov}_{G_X}(A)
    =\frac{|A|}{|\anc_{G_X}^+(A)|}
    \leq \frac{k}{k+1}.
\]
This gives a simple finite-size thresholding heuristic for separating low-coverage confounding patterns from ancestrally closed selection patterns.

\section{Calculating Adjusted Mutual Information (AMI)}\label{sec:appx_alg}
We show here, how we compute the AMI score for two partitions, $\Pi^\text{obs}_i$ and $\Pi^\text{obs}_j$.
Following \cite{mameche2024identifying}, each variable $V_i$ induces a partition $\Pi^\text{obs}_i$ of the context set $\Cc$, where two contexts lie in the same block if the conditional mechanism of $V_i$ is invariant across them. To quantify whether two variables exhibit coordinated mechanism shifts, we compare their induced partitions.

  Let $\Pi^\text{obs}_i = \{\pi_i^1, \ldots, \pi_i^I\}$ and $\Pi^\text{obs}_j = \{\pi_j^1, \ldots, \pi_j^J\}$ be two partitions of $\Cc$.
  Their overlap is summarized by the contingency table $\mathbf{N} = (n_{ab})$, where
  \[
    n_{ab} = \left|\pi_i^a \cap \pi_j^b\right|, \qquad
    u_a = \sum_b n_{ab}, \qquad
    v_b = \sum_a n_{ab}, \qquad
    N = |\Cc|.
  \]
  As in \cite{mameche2024identifying}, the mutual information between the two partitions is
  \[
    \operatorname{MI}(\Pi^\text{obs}_i, \Pi^\text{obs}_j)
    = \sum_{a=1}^{I} \sum_{b=1}^{J}
    \frac{n_{ab}}{N} \log \frac{n_{ab}N}{u_a v_b},
  \]
  with entropies
  \[
    H(\Pi^\text{obs}_i) = - \sum_{a=1}^{I} \frac{u_a}{N} \log \frac{u_a}{N},
    \qquad
    H(\Pi^\text{obs}_j) = - \sum_{b=1}^{J} \frac{v_b}{N} \log \frac{v_b}{N}.
  \]
  Large values indicate that knowing the shift pattern of one variable is informative about the shift pattern of the other, which is precisely the signal we expect when both are jointly affected by the same confounder or collider.

  With finitely many contexts, however, raw mutual information is positively biased even for unrelated partitions. We therefore use adjusted mutual information (AMI) \citep{vinh2010information}, which subtracts the expected mutual information under random partitions and normalizes the result:
  \[
    \operatorname{AMI}(\Pi^\text{obs}_i, \Pi^\text{obs}_j)
    = \frac{\operatorname{MI}(\Pi^\text{obs}_i, \Pi^\text{obs}_j) - \mathbb{E}[\operatorname{MI}(\Pi^\text{obs}_i, \Pi^\text{obs}_j)]}
    {\tfrac{1}{2}\left(H(\Pi^\text{obs}_i) + H(\Pi^\text{obs}_j)\right) - \mathbb{E}[\operatorname{MI}(\Pi^\text{obs}_i, \Pi^\text{obs}_j)]}.
  \]
  This yields $\operatorname{AMI} = 1$ for identical partitions and an expected value of $0$ for independent random partitions, making it a more reliable finite-sample affinity score than raw mutual information. The expectation $\mathbb{E}[\operatorname{MI}(\Pi^\text{obs}_i, \Pi^\text{obs}_j)]$ is computed according to the hypergeometric distribution.

  As a simple example, let $\Cc = \{c_1, c_2, c_3, c_4\}$ and consider
  \[
    \Pi^\text{obs}_i = \{\{c_1, c_2\}, \{c_3, c_4\}\},
    \qquad
    \Pi^\text{obs}_j = \{\{c_1, c_2\}, \{c_3, c_4\}\}.
  \]
  Then the contingency table is
  \[
    \mathbf{N} =
    \begin{pmatrix}
        2 & 0 \\
    0 & 2
  \end{pmatrix},
    \]
    so the two partitions agree exactly and $\operatorname{AMI}(\Pi^\text{obs}_i, \Pi^\text{obs}_j) = 1$. In contrast, if $\Pi^\text{obs}_j = \{\{c_1, c_3\}, \{c_2, c_4\}\}$, the overlap is spread evenly across the contingency table, the mutual information vanishes, and the adjusted score is correspondingly near $0$.

  In our algorithm, the pairwise AMI values define the affinity matrix $\ContingAB$, whose connected components are candidate subsets of variables that shift together.

\section{Additional Results on Synthetic Data}
\label{sec:extra_details}
\begin{table}[h]
    \centering
    \caption{Experimental Parameters for Synthetic Data Generation}
    \begin{tabular}{lp{10cm}}
        \toprule
        \textbf{Parameter} & \textbf{Description} \\
        \midrule
        $n$  & The number of samples (observations) available within each context. \\[1.5ex]
        
        $m$ & The total number of observed variables in the causal Directed Acyclic Graph (DAG). \\[1.5ex]
        
        
        $r$ & The ratio of mechanism shifts assigned to observed nodes relative to latent nodes. \\[1.5ex]
        
        $\textit{mcf}$ & Scaling factor for context availability, where the number of contexts $m_c = m \times \textit{mcf}$. \\[1.5ex]
        
        $\delta$  & The shift density; the ratio of contexts containing a shift to the total number of contexts (baseline $\delta = 0.48$). \\
        \bottomrule
    \end{tabular}
    \label{tab:parameters}
\end{table}
\paragraph{Synthetic Data Generation Process}
We generate $25$ instances of ground truth DAGs across for each node size, $n \in \{3, 5, 7, 9, 11, 13\}$. 
For our main experiments, we set the total number of contexts $m_c$ to be $2n$, with $500$ samples per context.
Within these contexts, we allow $0.48 * m_c$ contexts (to ensure sparsity) to appear to be shifted in order to ensure sparse mechanism shifts.
We set the ratio of the number of shifts per each observed node to the number of shifts per each latent node to be $0.25$.

To ensure exclusive mechanism shifts while maintaining a controlled graph density, we define the shift budget allocation for latent and observed nodes as follows. 
The total shift budget $B_{total}$ is defined as:
\[ B_{total} = \lfloor m_c \cdot \delta \rfloor \]
Given a set of latent nodes $S_{l}$ (comprising confounders and colliders) and a set of observed nodes $X$, we define the effective denominator $D$ using the observed shift ratio $r$:
\[ D = |S_{l}| + (r \cdot |X|) \]
The number of shifts assigned to each latent node ($V$) and each observed node ($X_i$) is then calculated as:
\[ V = \max\left(1, \left\lfloor \frac{B_{total}}{D} \right\rfloor\right) \]
\[ X_i = \lfloor r \cdot V \rfloor \]


To rigorously evaluate the proposed method, we conduct a series of controlled ablations across several key dimensions. These experiments are designed to test the robustness of our information-theoretic signatures under varying structural and environmental constraints.

\subsection{Parameter Definitions}
The synthetic data generation and subsequent evaluations are governed by the parameters detailed in Table~\ref{tab:parameters}. For all ablations, we vary the hyperparameter in question over the defined range, and fix the other 
parameters in their default values, i.e., $n = 500$, $m = 5$, our main oracle is \ourmethod\!($\mathcal{G}^*$), $r = 0.25$, $mcf = 3.0$, and $\delta = 0.48$.

\subsection{Ablation on Sample Complexity ($n$)}\label{sec:samp_comp}
We evaluate the data efficiency and asymptotic stability of the AMI and coverage signatures by varying $n \in \{100, 200, 500, 1000\}$. This ablation is performed across a range of DAG sizes ($m \in \{3, \dots, 15\}$) to determine 
the optimal number of samples required per context for reliable results. As we see in Fig. \ref{fig:ablation_samples}, we get reliable results with lower number of samples. However, we stick to $500$ samples as the variance is low and 
runtime is not high. 

The final classification is governed by a hierarchical search for optimal thresholds based on AMI and Coverage. A threshold of 0.1 on the AMI values suffices for distinguishing the presence of a latent variable.
It is important to note that the thresholds (lower) obtained for $n =100, 200$  are lower, $0.46, 0.48$ respectively, for smaller sample sizes and higher for $n =500, 1000$ ($0.60, 0.71$ respectively) which is expected as the estimates of mutual information are less reliable with fewer samples. 
Since we have $n=500$, we have $t^* = 0.6$. Subsets with coverage $< t^*$ are classified as \figbf{Confounded} as they are more prone to having localized mechanism influence, while those exceeding $t^*$ suggest \figbf{Selection Bias} due to widespread influence across upstream parents.

\input{figs/tex_figs/samples_new.tex}

\subsection{Ablation on the Structural Shift Ratio ($r$)}
This experiment investigates the impact of structural bias by varying the frequency of shifts in observed versus latent variables using $r \in \{0, 0.25, 0.5, 1.0, 1.5\}$. 
We assess the method's capability to isolate latent selection effects when observed nodes exhibit varying levels of mechanism volatility for graphs of size 5.
As we can see in in Fig. \ref{fig:shifts_ablation}, as $r$ increases, the performance of the method degrades as the observed nodes become more volatile and the 
signatures become less exclusive to latent shifts. \footnote{An interesting note here is that, since we choose a low value for $r$, it subdues the noise arising from backdoors in the underlying causal graph. If the ratio is higher, then we may misclassify 
confounding as selection bias in many cases. This is because, due to the presence of latent variables, while looking at conditional distributions, an observed parent that is conditioned on might open up a latent v-path, leading to unforeseen correlations. This would lead to noise as well.
Hence, having a low $r$ ratio also helps us in reducing the chance of seeing spurious effects via backdoors.}

\input{figs/tex_figs/shifts_new.tex}

\subsection{Ablation on the Shift Density ($\delta$)}
We test the robustness of \ourmethod against the sparsity assumption. We ablate over shift densities $\delta \in \{0.2, 0.48, 0.6, 0.8\}$, which denote the fraction of contexts in which any causal mechanism shifts occur, across the entire set of variables, $V$. 
Higher values of $\delta$ imply noisy mechanism shifts. We show in Fig. \ref{fig:sparsity_ablation} that \ourmethod works well even after relaxing sparsity assumptions.

\input{figs/tex_figs/sparse_new.tex}

\subsection{Ablation on number of Contexts ($m_c$)}
We perform \ourmethod's effectiveness across the number of given contexts. We vary $\textit{mcf} \in \{0.5, 1.0, 2.0, 3.0\}$ which is the ratio between the number of observed nodes in the graph, and the number of contexts $m_c$. 
More the number of contexts, less the number of noisy mechanism shifts detected. As we see in Fig. \ref{fig:contexts_ablation}, higher the ratio, better the performance.

\input{figs/tex_figs/contexts_new.tex}

\input{figs/tex_figs/perturbed.tex}



\subsection{\ourmethod's performance on incorrect graphs}\label{sec:perturbed}
To simulate structural errors in causal discovery (e.g., from a search algorithm), we introduce noise into the DAG via edge-flipping. We evaluate the \textit{Perturbed} oracle at $m=5$ with flip fractions, $ f\in \{0.1, 0.25, 0.4, 0.5\}$ in Fig. \ref{fig:perturbed_graph_ablation}. 
This measures the degree of structural degradation the algorithm can tolerate before the identification of latent types becomes unreliable.


\subsection{\ourmethod and its variants} \label{sec:oracles_appendix}
To rigorously evaluate the framework components, we define three oracles:
\begin{itemize}
    \item \figbf{\ourmethod\!($\Pi^\text{obs}$):} This variant receives the partitions and then performs classification.
    \item \figbf{\ourmethod\!($\mathcal{G}^*$):} This variant operates on the true underlying causal graph.
    \item \figbf{\ourmethod\!($\hat{\mathcal{G}}$):} This variant operates on a learned causal graph using \topic.
\end{itemize}
We showcase the performance on structural bias classification of these variants according to different metrics in Fig. \ref{fig:oracles_extra}. The runtimes for each oracle across different DAG sizes are given in Table \ref{tab:runtime_results}. \\

\input{figs/tex_figs/main_ablation_extras.tex}
\begin{table}[h]
\centering
\caption{Runtimes across variants and graph sizes ($n$).}
\label{tab:runtime_results}
\small 
\begin{tabular}{lccclcc} 
\toprule
\textbf{Oracle} & \textbf{$n$} & \textbf{Runtime (s)} & & \textbf{Oracle} & \textbf{$n$} & \textbf{Runtime (s)} \\
\cmidrule{1-3} \cmidrule{5-7}

\multirow{7}{*}{full} 
& 3  & $0.19 \pm 0.01$ & & \multirow{7}{*}{perturbed} 
& 3  & $0.38 \pm 0.02$ \\
& 5  & $1.40 \pm 0.31$ & & & 5  & $2.89 \pm 0.55$ \\
& 7  & $4.51 \pm 0.72$ & & & 7  & $9.24 \pm 1.02$ \\
& 9  & $10.79 \pm 0.82$ & & & 9  & $21.46 \pm 1.49$ \\
& 11 & $19.92 \pm 1.37$ & & & 11 & $40.70 \pm 2.82$ \\
& 13 & $34.47 \pm 2.56$ & & & 13 & $68.37 \pm 4.48$ \\
\midrule
\multirow{7}{*}{\topic} 
& 3  & $0.27 \pm 0.40$ & & \multirow{7}{*}{partitions} 
& 3  & $0.38 \pm 0.78$ \\
& 5  & $1.39 \pm 0.34$ & & & 5  & $0.13 \pm 0.13$ \\
& 7  & $4.35 \pm 0.70$ & & & 7  & $0.21 \pm 0.15$ \\
& 9  & $10.96 \pm 0.73$ & & & 9  & $0.31 \pm 0.18$ \\
& 11 & $19.39 \pm 1.47$ & & & 11 & $0.37 \pm 0.24$ \\
& 13 & $35.13 \pm 2.77$ & & & 13 & $0.51 \pm 0.24$ \\
\bottomrule
\end{tabular}
\end{table}

\begin{algorithm}[t!]
    \caption{$\ourmethod$ Suite: Discovery and Classification}
    \label{alg:conjoined_suite}
    
    \textbf{Sub-procedure A:} $\ourmethod (\Pi^\text{obs})$ \\
    \Input{ Partitions $\Pi^\text{obs}$ for every observed variable}
    \Output{ Subsets of $\X$ suffering from selection bias or confounding.}  
    \BlankLine
    Construct an affinity matrix $\ContingAB$ where $\ContingAB_{ij} = \text{AMI}(\Pi^\text{obs}_i, \Pi^\text{obs}_j)$\;
    Identify subsets $X_S \subseteq \X$ as connected components in $\ContingAB$\;
    \eIf{$X_S = \emptyset$}{ \Return{Unbiased} }{
        \ForEach{identified subset $X_S$}{
            Calculate coverage $\operatorname{cov}(X_S) \geq \tau$ ? \figbf{Selection Bias} : \figbf{Confounding}\;
        }
        \Return{Subsets $X_S$ with labels}\;
    }
    
    \vspace{0.5em} \hrule \vspace{0.5em} 
    
    \textbf{Sub-procedure B:} $\ourmethod\!(\mathcal{G})$ \\
    \Input{ Observed data over $\X, C$; observed causal graph $\Gc$.}
    \ForEach{variable $X_i \in \X$}{
        Compute $p_{c,c'}$ and convert to partition $\Pi^\text{obs}_i$\;
    }
    \Return{\ourmethod\!($\Pi$)}\;

    \vspace{0.5em} \hrule \vspace{0.5em}

    \textbf{Sub-procedure C:} $\ourmethod\!(\hat{\mathcal{G}})$ \\
    \Input{Observed data over $\X, C$}
    $\hat{\Gc}$ = \topic({$\X, C$})\;
    \Return{\ourmethod\!($\hat{\Gc}$)}\;
\end{algorithm}

\subsection{Analysis of runtimes}


All experiments were conducted on an Apple M3 Pro (14-core CPU, 36GB RAM), utilizing 10 CPU cores to parallelize all context evaluations via Python’s multiprocessing. 
Our main oracle, \mbox{\ourmethod\!($\mathcal{G^*}$)} exhibits a mean runtime of approximately 34s for 13-node graphs. The \ourmethod\!($\Pi$) oracle takes less than 1s across all sizes. The \ourmethod\!($\hat{\mathcal{G}}$) 
variant, where the underlying causal graph is estimated using \topic, closely tracks the  \ourmethod\!($\mathcal{G^*}$)'s performance. As for the baselines, the latent selection variants (LS-Pooled, LS-Contexts) and 
FCI variants are very fast, with sub-second execution times even for the larger graphs, CoCo scales much more aggressively, and is slower than \ourmethod\!($\mathcal{G^*}$). 
The runtimes for each oracle are shown in Table \ref{tab:runtime_results} and the baselines in Fig. \ref{fig:runtime_comparison} in Section \ref{sec:extra_details} of the Appendix.

\subsection{Additional Details on Baselines}\label{sec:ls_deets}
\paragraph{LS-Pooled}
The first baseline \citep{dai2025selection}, treats the non-i.i.d. dataset as a unified, exchangeable sample by marginalizing over all context labels. In this approach, causal discovery is performed in a single pass over the aggregated data matrix, which maximizes the sample size to increase the statistical power of the underlying conditional independence and rank tests. 
\paragraph{LS-Contexts}
The second baseline \citep{dai2025selection}, explicitly performs structural discovery for each context. After this, the latent structure recovered in a majority of the contexts is retained.
\paragraph{Additional Comparisons with Baselines}\label{sec:baselines_appendix}
We show some extra results on the baselines in this section. Fig. \ref{fig:baselines_errors} contains the reported error bars of the baselines in comparison with \ourmethod. Fig. \ref{fig:baseline_metrics} gives the accuracy, precision, and recall on the task of structural bias classification,
Fig. \ref{fig:latent_selection_comparison} compares \ourmethod with the \citep{dai2025selection} variants on their data assumptions, and Fig. \ref{fig:runtime_comparison} compares the runtime of all baselines.

\input{figs/tex_figs/baselines_new_fill.tex}
\vspace{1.5em}

\input{figs/tex_figs/baselines_new_extras.tex}
\vspace{1.5em}

\input{figs/tex_figs/latent_selection_vs_topic.tex}
\vspace{1.5em}

\input{figs/tex_figs/runtime_baselines.tex}


\section{Additional Results on the Cell-Signaling Data} 
\label{sec:extra_sachs}
The \citet{sachs2005causal} dataset is particularly interesting as it contains multiple contexts with known interventions. However, it is also very challenging as the system is complex and has many cyclic dependencies~\cite{meinshausen2016methods} and hence these interventions might affect multiple nodes in overlapping contexts, leading to noisy partitions. 
In the unbiased setting, \ourmethod finds overlaps in causal mechanisms of Akt-Mek, Plcg-P38, and PKA-PKC. This can be due to biological crosstalk.

For instance, the dependencies detected between \mbox{Akt-Mek} may be attributed to the extensive pathway crosstalk between the Ras-MAPK and PI3K-mTORC1 cascades \citep{mendoza2011ras}. 
Similarly, the identified dependencies between \mbox{PKA-PKC} likely reflect shared regulation via phospholipase C(PLC) signaling; such simultaneous activation pathways can render conditioning on \mbox{PLC$\gamma$} 
alone insufficient for isolation \citep{nishizuka1995protein}. Finally, the overlapping shifts in \mbox{Plcg-P38} may reflect parallel activation by common upstream MAP3Ks in response to shared extracellular 
growth or stress signals \citep{canovas2021diversity}.



The mechanism shifts we obtain are given in Table \ref{tab:mechanism_shifts} and the retrieved latent structures by the other baselines in Fig. \ref{fig:sachs-baselines}.

\vspace{1em}

\begin{table}[h]
    \centering
    \small
    \renewcommand{\arraystretch}{1.2}
    \begin{tabular}{ll}
        \toprule
        \textbf{Node} & \textbf{Causal Mechanism Shifts} \\
        \midrule
        Akt  & \textbf{C0} (T cell activation), \textbf{C2} (AKT inhibitor) \\
        Erk  & \textbf{C3} (G06976), \textbf{C8} (beta2cAMP) \\
        Jnk  & No shifts \\
        Mek  & \textbf{C0} (activation), \textbf{C2} (AKT inhibitor), \textbf{C4} (MEK1/2), \textbf{C6} (LY294002) \\
        P38  & \textbf{C1} (ICAM-2/LFA-1), \textbf{C7} (PMA/PIP2 inhibition) \\
        PIP2 & \textbf{C3} (G06976), \textbf{C5} (U0126) \\
        PIP3 & \textbf{C0, C1, C2, C3, C4}, \textbf{C7} \\
        PKA  & \textbf{C3} (AKT inhibition), \textbf{C4} (MEK inhibition) \\
        PKC  & \textbf{C4} (Psitectorigenin) \\
        Plcg & \textbf{C1} (LFA-1), \textbf{C7} (PIP2 inhibition), \textbf{C8} (beta2cAMP) \\
        Raf  & \textbf{C0} (activation), \textbf{C4} (MEK inhibition), \textbf{C7} (PMA), \textbf{C8} (beta2cAMP) \\
        \bottomrule
    \end{tabular}
    \vspace{1em}
    \caption{Detailed inventory of causal mechanism shifts identified across the signaling network nodes.}
    \label{tab:mechanism_shifts}
\end{table}

\vspace{3em}

\input{figs/tex_figs/sachs_coco.tex}


%% file: figs/tex_figs/samples_new.tex
\begin{figure*}[t]
    \centering
    \renewcommand{\prtickfontsize}{\tiny}
    \renewcommand{\prlabelfontsize}{\tiny}
    \renewcommand{\prlegendfontsize}{\tiny}

    \pgfplotsset{
        every axis/.append style={
            pretty line, 
            width=1.1\linewidth, 
            height=2.8cm,
            unbounded coords=discard,
            restrict x to domain=0:13, 
            xmax=13,                   
            clip=true,                 
            xlabel={Dag Size $\lvert\mathcal{G}\rvert$},
            mark size=0.7pt,
            y label style={font=\prlabelfontsize},
            x label style={font=\prlabelfontsize},
        }
    }

    \newcommand{\addstdplot}[5]{
        \addplot+[name path=upper, draw=none, forget plot] table[col sep=comma, x=X, y expr=\thisrow{#3}+\thisrow{#3_std}] {#4};
        \addplot+[name path=lower, draw=none, forget plot] table[col sep=comma, x=X, y expr=\thisrow{#3}-\thisrow{#3_std}] {#4};
        \addplot+[fill opacity=0.15, forget plot] fill between[of=upper and lower];
        \addplot+[thick, mark=#2] table[col sep=comma, x=X, y=#3] {#4}; 
        \ifx\&#5\&\else\addlegendentry{#5}\fi
    }

    \begin{minipage}{0.31\textwidth}
        \centering
        \begin{tikzpicture}
            \begin{axis}[
                ylabel={Class. $F_1$}, ymin=0, ymax=1.1,
                legend columns=4,
                legend to name=CommonLegendRow1
            ]
                \addstdplot{0}{*}{100}{figs/csv_for_figs/samples_ablations/samples_f1.csv}{N=100}
                \addstdplot{1}{square*}{200}{figs/csv_for_figs/samples_ablations/samples_f1.csv}{N=200}
                \addstdplot{2}{triangle*}{500}{figs/csv_for_figs/samples_ablations/samples_f1.csv}{N=500}
                \addstdplot{3}{diamond*}{1000}{figs/csv_for_figs/samples_ablations/samples_f1.csv}{N=1000}
            \end{axis}
        \end{tikzpicture}
    \end{minipage}
    \begin{minipage}{0.31\textwidth}
        \centering
        \begin{tikzpicture}
            \begin{axis}[ylabel={Subset $F_1$}, ymin=0, ymax=1.1]
                \addstdplot{0}{*}{100}{figs/csv_for_figs/samples_ablations/samples_subset.csv}{}
                \addstdplot{1}{square*}{200}{figs/csv_for_figs/samples_ablations/samples_subset.csv}{}
                \addstdplot{2}{triangle*}{500}{figs/csv_for_figs/samples_ablations/samples_subset.csv}{}
                \addstdplot{3}{diamond*}{1000}{figs/csv_for_figs/samples_ablations/samples_subset.csv}{}
            \end{axis}
        \end{tikzpicture}
    \end{minipage}

    \vspace{0.6em}
    \centerline{\ref{CommonLegendRow1}} 

    \begin{minipage}{0.24\textwidth}
        \centering
        \begin{tikzpicture}
            \begin{axis}[
                xlabel={Threshold $t \, (N=100)$}, ylabel={Rate}, ymin=0, ymax=1.1, xmin=0, xmax=1,
                restrict x to domain=0:1,
                legend to name=CommonLegendRow4,
                legend columns=3
            ]
                \def\file{figs/csv_for_figs/samples_ablations/samples_threshold_sweep_100.csv}
                \addplot+[thick, mark=none] table[x=threshold, y=tpr_confounder, col sep=comma] {\file}; 
                \addlegendentry{TPR Conf.}
                \addplot+[thick, mark=none] table[x=threshold, y=tpr_collider, col sep=comma] {\file}; 
                \addlegendentry{TPR Sel.}
                \addplot+[dashed, ultra thick, mark=none, pr-color-gray4] table[x=threshold, y=f1_macro, col sep=comma] {\file}; 
                \addlegendentry{Macro $F_1$}
                
                \draw[pr-color-gray4, thick, dotted] (axis cs:0.46, 0) -- (axis cs:0.46, 1.1) 
                    node[pos=0.9, right, font=\tiny] {$t^*$};
            \end{axis}
        \end{tikzpicture}
    \end{minipage}\hfill
    \begin{minipage}{0.24\textwidth}
        \centering
        \begin{tikzpicture}
            \begin{axis}[
                xlabel={Threshold $t \, (N=200)$}, ylabel={Rate}, ymin=0, ymax=1.1, xmin=0, xmax=1,
                restrict x to domain=0:1
            ]
                \def\file{figs/csv_for_figs/samples_ablations/samples_threshold_sweep_200.csv}
                \addplot+[thick, mark=none] table[x=threshold, y=tpr_confounder, col sep=comma] {\file}; 
                \addplot+[thick, mark=none] table[x=threshold, y=tpr_collider, col sep=comma] {\file}; 
                \addplot+[dashed, ultra thick, mark=none, pr-color-gray4] table[x=threshold, y=f1_macro, col sep=comma] {\file}; 
                
                \draw[pr-color-gray4, thick, dotted] (axis cs:0.48, 0) -- (axis cs:0.48, 1.1) 
                    node[pos=0.9, right, font=\tiny] {$t^*$};
            \end{axis}
        \end{tikzpicture}
    \end{minipage}\hfill
    \begin{minipage}{0.24\textwidth}
        \centering
        \begin{tikzpicture}
            \begin{axis}[
                xlabel={Threshold $t \, (N=500)$}, ylabel={Rate}, ymin=0, ymax=1.1, xmin=0, xmax=1,
                restrict x to domain=0:1
            ]
                \def\file{figs/csv_for_figs/samples_ablations/samples_threshold_sweep_500.csv}
                \addplot+[thick, mark=none] table[x=threshold, y=tpr_confounder, col sep=comma] {\file}; 
                \addplot+[thick, mark=none] table[x=threshold, y=tpr_collider, col sep=comma] {\file}; 
                \addplot+[dashed, ultra thick, mark=none, pr-color-gray4] table[x=threshold, y=f1_macro, col sep=comma] {\file}; 
                
                \draw[pr-color-gray4, thick, dotted] (axis cs:0.60, 0) -- (axis cs:0.60, 1.1) 
                    node[pos=0.9, right, font=\tiny] {$t^*$};
            \end{axis}
        \end{tikzpicture}
    \end{minipage}\hfill
    \begin{minipage}{0.24\textwidth}
        \centering
        \begin{tikzpicture}
            \begin{axis}[
                xlabel={Threshold $t \, (N=1000)$}, ylabel={Rate}, ymin=0, ymax=1.1, xmin=0, xmax=1,
                restrict x to domain=0:1
            ]
                \def\file{figs/csv_for_figs/samples_ablations/samples_threshold_sweep_1000.csv}
                \addplot+[thick, mark=none] table[x=threshold, y=tpr_confounder, col sep=comma] {\file}; 
                \addplot+[thick, mark=none] table[x=threshold, y=tpr_collider, col sep=comma] {\file}; 
                \addplot+[dashed, ultra thick, mark=none, pr-color-gray4] table[x=threshold, y=f1_macro, col sep=comma] {\file}; 
                
                \draw[pr-color-gray4, thick, dotted] (axis cs:0.71, 0) -- (axis cs:0.71, 1.1) 
                    node[pos=0.9, right, font=\tiny] {$t^*$};
            \end{axis}
        \end{tikzpicture}
    \end{minipage}

    \vspace{0.8em}
    \centerline{\ref{CommonLegendRow4}}

    \caption{Ablation results for Sample Complexity across DAG sizes. The average execution runtimes are 1.57s, 2.91s, 16.79s, and 124.23s for $N=100, 200, 500,$ and $1000$, respectively. 
    We choose $n = 500$ for our main experiments as it provides a good balance of performance and runtime and matches base requirements for the baselines. The structural bias classification and subset recovery $F1$ scores (upper) are similar.}
    \label{fig:ablation_samples}
\end{figure*}

%% file: figs/tex_figs/shifts_new.tex
\begin{figure*}[t]
    \centering
    \renewcommand{\prtickfontsize}{\tiny}
    \renewcommand{\prlabelfontsize}{\tiny}
    \renewcommand{\prlegendfontsize}{\fontsize{4}{5}\selectfont} 

    \pgfplotsset{
        compat=1.18,
        every axis/.append style={
            pretty line,
            scale only axis,       
            width=0.65\linewidth,   
            height=1.4cm,          
            unbounded coords=discard,
            mark size=0.6pt,       
            clip mode=individual,  
            clip=true,             
            axis on top,           
            xlabel style={font=\prlabelfontsize, at={(ticklabel cs:0.5)}, anchor=north, yshift=2pt},
            ylabel style={font=\prlabelfontsize, at={(ticklabel cs:0.5)}, anchor=south, yshift=-2pt},
            tick label style={font=\prtickfontsize},
            legend style={
                font=\prlegendfontsize, 
                at={(0.98,0.98)}, 
                anchor=north east, 
                cells={anchor=west},
                fill=white, 
                fill opacity=0.8, 
                draw opacity=1,
                row sep=-2pt, 
                inner sep=1pt
            }
        },
        bottom x/.style={xlabel={Ratio $r$}}
    }

    \newcommand{\addstdplot}[5]{
        \addplot+[name path=upper, draw=none, forget plot, opacity=0] table[col sep=comma, x=X, y expr=\thisrow{#3}+\thisrow{#3_std}] {#4};
        \addplot+[name path=lower, draw=none, forget plot, opacity=0] table[col sep=comma, x=X, y expr=\thisrow{#3}-\thisrow{#3_std}] {#4};
        \addplot+[fill opacity=0.15, forget plot, draw=none] fill between[of=upper and lower];
        \addplot+[thick, mark=#2] table[col sep=comma, x=X, y=#3] {#4}; 
        \ifx\&#5\&\else\addlegendentry{#5}\fi
    }

    \begin{minipage}{0.24\textwidth}
        \centering
        \begin{tikzpicture}
            \begin{axis}[ylabel={Class. $F_1$}, ymin=0, ymax=1.1, bottom x]
                \def\f{figs/csv_for_figs/shifts_ablations/shifts_f1.csv}
                \addstdplot{0}{*}{full}{\f}{\ourmethod\!($\mathcal{G}^*$)} 
            \end{axis}
        \end{tikzpicture}
    \end{minipage}\hfill
    \begin{minipage}{0.24\textwidth}
        \centering
        \begin{tikzpicture}
            \begin{axis}[ylabel={Subset $F_1$}, ymin=0, ymax=1.1, bottom x]
                \def\f{figs/csv_for_figs/shifts_ablations/shifts_subset.csv}
                \addstdplot{0}{*}{full}{\f}{\ourmethod\!($\mathcal{G}^*$)}
            \end{axis}
        \end{tikzpicture}
    \end{minipage}\hfill
    \begin{minipage}{0.24\textwidth}
        \centering
        \begin{tikzpicture}
            \begin{axis}[ylabel={AMI}, ymin=0, ymax=1.1, bottom x]
                \def\file{figs/csv_for_figs/shifts_ablations/shifts_ami_full.csv}
                \addstdplot{1}{*}{none}{\file}{Unbiased}
                \addstdplot{2}{square*}{confounder}{\file}{Conf.}
                \addstdplot{3}{triangle*}{collider}{\file}{Sel.}
            \end{axis}
        \end{tikzpicture}
    \end{minipage}\hfill
    \begin{minipage}{0.24\textwidth}
        \centering
        \begin{tikzpicture}
            \begin{axis}[ylabel={Coverage}, ymin=0, ymax=1.1, bottom x]
                \def\file{figs/csv_for_figs/shifts_ablations/shifts_coverage_full.csv}
                \addstdplot{1}{*}{none}{\file}{Unbiased}
                \addstdplot{2}{square*}{confounder}{\file}{Conf.}
                \addstdplot{3}{triangle*}{collider}{\file}{Sel.}
            \end{axis}
        \end{tikzpicture}
    \end{minipage}

    \caption{Ablation over values of ratio $r$. As $r$ increases, performance metrics decline, indicating higher sensitivity to observed node noise.}
    \label{fig:shifts_ablation}
\end{figure*}

%% file: figs/tex_figs/sparse_new.tex
\begin{figure*}[t]
    \centering
    \renewcommand{\prtickfontsize}{\tiny}
    \renewcommand{\prlabelfontsize}{\tiny}
    \renewcommand{\prlegendfontsize}{\fontsize{4}{5}\selectfont} 

    \pgfplotsset{
        compat=1.18,
        every axis/.append style={
            pretty line,
            scale only axis,       
            width=0.65\linewidth,   
            height=1.6cm,          
            unbounded coords=discard,
            mark size=0.7pt,       
            clip mode=individual,  
            clip=true,             
            axis on top,           
            xlabel style={font=\prlabelfontsize, at={(ticklabel cs:0.5)}, anchor=north, yshift=2pt},
            ylabel style={font=\prlabelfontsize, at={(ticklabel cs:0.5)}, anchor=south, yshift=-2pt},
            tick label style={font=\prtickfontsize},
            legend style={
                font=\prlegendfontsize, 
                at={(0.98,0.98)}, 
                anchor=north east, 
                cells={anchor=west},
                fill=white, 
                fill opacity=0.8, 
                draw opacity=1,
                row sep=-2.5pt, 
                inner sep=1pt
            }
        },
        bottom x/.style={xlabel={Density $\delta$}, xticklabel style={yshift=0pt}}
    }

    \newcommand{\addstdplot}[5]{
        \addplot+[name path=upper, draw=none, forget plot, opacity=0] table[col sep=comma, x=X, y expr=\thisrow{#3}+\thisrow{#3_std}] {#4};
        \addplot+[name path=lower, draw=none, forget plot, opacity=0] table[col sep=comma, x=X, y expr=\thisrow{#3}-\thisrow{#3_std}] {#4};
        \addplot+[fill opacity=0.15, forget plot, draw=none] fill between[of=upper and lower];
        \addplot+[thick, mark=#2] table[col sep=comma, x=X, y=#3] {#4}; 
        \ifx\&#5\&\else\addlegendentry{#5}\fi
    }

    
    \begin{minipage}{0.24\textwidth}
        \centering
        \begin{tikzpicture}
            \begin{axis}[ylabel={Class. $F_1$}, ymin=0, ymax=1.1, bottom x, legend style={at={(0.98,0.02)}, anchor=south east}]
                \addstdplot{0}{*}{full}{figs/csv_for_figs/sparsity_ablations/sparsity_f1.csv}{\ourmethod\!($\mathcal{G}^*$)}
            \end{axis}
        \end{tikzpicture}
    \end{minipage}\hfill
    \begin{minipage}{0.24\textwidth}
        \centering
        \begin{tikzpicture}
            \begin{axis}[ylabel={Subset $F_1$}, ymin=0, ymax=1.1, bottom x, legend style={at={(0.98,0.02)}, anchor=south east}]
                \addstdplot{0}{*}{full}{figs/csv_for_figs/sparsity_ablations/sparsity_subset.csv}{\ourmethod\!($\mathcal{G}^*$)}
            \end{axis}
        \end{tikzpicture}
    \end{minipage}\hfill
    \begin{minipage}{0.24\textwidth}
        \centering
        \begin{tikzpicture}
            \begin{axis}[ylabel={AMI}, ymin=0, ymax=1.1, bottom x]
                \def\file{figs/csv_for_figs/sparsity_ablations/sparsity_ami_full.csv}
                \addstdplot{1}{*}{none}{\file}{Unbiased}
                \addstdplot{2}{square*}{confounder}{\file}{Conf.}
                \addstdplot{3}{triangle*}{collider}{\file}{Sel.}
            \end{axis}
        \end{tikzpicture}
    \end{minipage}\hfill
    \begin{minipage}{0.24\textwidth}
        \centering
        \begin{tikzpicture}
            \begin{axis}[ylabel={Coverage}, ymin=0, ymax=1.1, bottom x]
                \def\file{figs/csv_for_figs/sparsity_ablations/sparsity_coverage_full.csv}
                \addstdplot{1}{*}{none}{\file}{Unbiased}
                \addstdplot{2}{square*}{confounder}{\file}{Conf.}
                \addstdplot{3}{triangle*}{collider}{\file}{Sel.}
            \end{axis}
        \end{tikzpicture}
    \end{minipage}

    \caption{Ablation over the fraction of contexts in which variables undergo causal mechanism shifts. \ourmethod\!($\mathcal{G}^*$) works well even when the assumption is relaxed.}
    \label{fig:sparsity_ablation}
\end{figure*}

%% file: figs/tex_figs/contexts_new.tex
\begin{figure*}[t]
    \centering
    \renewcommand{\prtickfontsize}{\tiny}
    \renewcommand{\prlabelfontsize}{\tiny}
    \renewcommand{\prlegendfontsize}{\fontsize{4}{5}\selectfont} 

    \pgfplotsset{
        compat=1.18,
        every axis/.append style={
            pretty line,
            scale only axis,       
            width=0.65\linewidth,   
            height=1.6cm,          
            unbounded coords=discard,
            mark size=0.6pt,       
            clip mode=individual,  
            clip=true,             
            axis on top,           
            xlabel style={font=\prlabelfontsize, at={(ticklabel cs:0.5)}, anchor=north, yshift=2pt},
            ylabel style={font=\prlabelfontsize, at={(ticklabel cs:0.5)}, anchor=south, yshift=-2pt},
            tick label style={font=\prtickfontsize},
            legend style={
                font=\prlegendfontsize, 
                at={(0.98,0.98)}, 
                anchor=north east, 
                cells={anchor=west},
                fill=white, 
                fill opacity=0.8, 
                draw opacity=1,
                row sep=-2.5pt, 
                inner sep=1pt
            }
        },
        bottom x/.style={xlabel={Contexts ($m_c$)}, xticklabel style={yshift=0pt}}
    }

    \newcommand{\addstdplot}[5]{
        \addplot+[name path=upper, draw=none, forget plot, opacity=0] table[col sep=comma, x=X, y expr=\thisrow{#3}+\thisrow{#3_std}] {#4};
        \addplot+[name path=lower, draw=none, forget plot, opacity=0] table[col sep=comma, x=X, y expr=\thisrow{#3}-\thisrow{#3_std}] {#4};
        \addplot+[fill opacity=0.15, forget plot, draw=none] fill between[of=upper and lower];
        \addplot+[thick, mark=#2] table[col sep=comma, x=X, y=#3] {#4}; 
        \ifx\&#5\&\else\addlegendentry{#5}\fi
    }

    
    \begin{minipage}{0.24\textwidth}
        \centering
        \begin{tikzpicture}
            \begin{axis}[
                title={\small (a) Contextual $F_1$}, ylabel={$F_1$}, ymin=0, ymax=1.1, bottom x,
                legend style={at={(0.98,0.02)}, anchor=south east} 
            ]
                \addstdplot{0}{*}{5}{figs/csv_for_figs/contexts_ablations/contexts_f1.csv}{\ourmethod\!($\mathcal{G}^*$)}
            \end{axis}
        \end{tikzpicture}
    \end{minipage}\hfill
    \begin{minipage}{0.24\textwidth}
        \centering
        \begin{tikzpicture}
            \begin{axis}[
                title={\small (b) Subset Recovery}, ylabel={Sub $F_1$}, ymin=0, ymax=1.1, bottom x,
                legend style={at={(0.98,0.02)}, anchor=south east} 
            ]
                \addstdplot{0}{*}{5}{figs/csv_for_figs/contexts_ablations/contexts_subset.csv}{\ourmethod\!($\mathcal{G}^*$)}
            \end{axis}
        \end{tikzpicture}
    \end{minipage}\hfill
    \begin{minipage}{0.24\textwidth}
        \centering
        \begin{tikzpicture}
            \begin{axis}[title={\small (c) AMI Signature}, ylabel={AMI}, ymin=0, ymax=1.1, bottom x,
                legend style={at={(0.02,0.98)}, anchor=north west}]
                \def\file{figs/csv_for_figs/contexts_ablations/contexts_ami_5.csv}
                \addstdplot{1}{*}{none}{\file}{Unbiased}
                \addstdplot{2}{square*}{confounder}{\file}{Conf.}
                \addstdplot{3}{triangle*}{collider}{\file}{Sel.}
            \end{axis}
        \end{tikzpicture}
    \end{minipage}\hfill
    \begin{minipage}{0.24\textwidth}
        \centering
        \begin{tikzpicture}
            \begin{axis}[title={\small (d) Coverage Signature}, ylabel={Coverage}, ymin=0, ymax=1.1, bottom x,
                legend style={at={(0.02,0.98)}, anchor=north west}]
                \def\file{figs/csv_for_figs/contexts_ablations/contexts_coverage_5.csv}
                \addstdplot{1}{*}{none}{\file}{Unbiased}
                \addstdplot{2}{square*}{confounder}{\file}{Conf.}
                \addstdplot{3}{triangle*}{collider}{\file}{Sel.}
            \end{axis}
        \end{tikzpicture}
    \end{minipage}

    \caption{Ablation over the ratio of the number of observed variables with the total number of contexts. The X-axis denotes the factor multiplied to the DAG size to obtain the contexts.}
    \label{fig:contexts_ablation}
\end{figure*}

%% file: figs/tex_figs/perturbed.tex
\begin{figure*}[]
    \centering
    \renewcommand{\prtickfontsize}{\tiny}
    \renewcommand{\prlabelfontsize}{\tiny}
    \renewcommand{\prlegendfontsize}{\fontsize{4}{5}\selectfont} 

    \pgfplotsset{
        compat=1.18,
        every axis/.append style={
            pretty line,        
            scale only axis,
            width=0.65\linewidth, 
            height=1.6cm,      
            unbounded coords=discard,
            mark size=0.7pt,
            restrict x to domain=0.1:0.5, 
            xmin=0.1, xmax=0.5, 
            clip mode=individual,
            clip=true,
            axis on top,
            xtick={0.1, 0.2, 0.3, 0.4, 0.5},
            xlabel style={font=\prlabelfontsize, at={(ticklabel cs:0.5)}, anchor=north, yshift=2pt},
            ylabel style={font=\prlabelfontsize, at={(ticklabel cs:0.5)}, anchor=south, yshift=-2pt},
            tick label style={font=\prtickfontsize},
            legend style={
                font=\prlegendfontsize, 
                at={(0.98,0.98)}, 
                anchor=north east, 
                cells={anchor=west},
                fill=white, 
                fill opacity=0.8, 
                draw opacity=1,
                row sep=-2.5pt, 
                inner sep=1pt
            }
        },
        bottom x/.style={xlabel={Flipping Fraction $f$}}
    }

    \newcommand{\addstdplot}[5]{
        \addplot+[name path=upper, draw=none, forget plot, opacity=0] table[col sep=comma, x=X, y expr=\thisrow{#3}+\thisrow{#3_std}] {#4};
        \addplot+[name path=lower, draw=none, forget plot, opacity=0] table[col sep=comma, x=X, y expr=\thisrow{#3}-\thisrow{#3_std}] {#4};
        \addplot+[fill opacity=0.15, forget plot, draw=none] fill between[of=upper and lower];
        \addplot+[thick, mark=#2] table[col sep=comma, x=X, y=#3] {#4}; 
        \ifx\&#5\&\else\addlegendentry{#5}\fi
    }

    
    \begin{minipage}{0.24\textwidth}
        \centering
        \begin{tikzpicture}
            \begin{axis}[
                ylabel={Class. $F_1$}, ymin=0, ymax=1.1, bottom x,
                legend style={at={(0.98,0.02)}, anchor=south east} 
            ]
                \addstdplot{0}{*}{perturbed}{figs/csv_for_figs/perturbed_graph_ablations/perturbed_graph_f1.csv}{\ourmethod\!($\mathcal{G}^*$)}
            \end{axis}
        \end{tikzpicture}
    \end{minipage}\hfill
    \begin{minipage}{0.24\textwidth}
        \centering
        \begin{tikzpicture}
            \begin{axis}[
                ylabel={Subset $F_1$}, ymin=0, ymax=1.1, bottom x,
                legend style={at={(0.98,0.02)}, anchor=south east} 
            ]
                \addstdplot{0}{*}{perturbed}{figs/csv_for_figs/perturbed_graph_ablations/perturbed_graph_subset.csv}{\ourmethod\!($\mathcal{G}^*$)}
            \end{axis}
        \end{tikzpicture}
    \end{minipage}\hfill
    \begin{minipage}{0.24\textwidth}
        \centering
        \begin{tikzpicture}
            \begin{axis}[ylabel={AMI}, ymin=0, ymax=1.1, bottom x]
                \def\file{figs/csv_for_figs/perturbed_graph_ablations/perturbed_graph_ami_perturbed.csv}
                \addstdplot{1}{*}{none}{\file}{Unbiased}
                \addstdplot{2}{square*}{confounder}{\file}{Conf.}
                \addstdplot{3}{triangle*}{collider}{\file}{Sel. Bias}
            \end{axis}
        \end{tikzpicture}
    \end{minipage}\hfill
    \begin{minipage}{0.24\textwidth}
        \centering
        \begin{tikzpicture}
            \begin{axis}[ylabel={Coverage}, ymin=0, ymax=1.1, bottom x]
                \def\file{figs/csv_for_figs/perturbed_graph_ablations/perturbed_graph_coverage_perturbed.csv}
                \addstdplot{1}{*}{none}{\file}{Unbiased}
                \addstdplot{2}{square*}{confounder}{\file}{Conf.}
                \addstdplot{3}{triangle*}{collider}{\file}{Sel. Bias}
            \end{axis}
        \end{tikzpicture}
    \end{minipage}
    
    \caption{Ablation of performance mettrics across flipping fractions. Since the performance drop is not huge, we can say  that \ourmethod can accommodate graph mis-specification.}
    \label{fig:perturbed_graph_ablation}
\end{figure*}

%% file: figs/tex_figs/main_ablation_extras.tex
\begin{figure*}[t] 
    \centering
    \renewcommand{\prtickfontsize}{\tiny}
    \renewcommand{\prlabelfontsize}{\tiny}
    \renewcommand{\prlegendfontsize}{\tiny}

    \pgfplotsset{
        compat=1.18,
        every axis/.append style={
            pretty line,        
            scale only axis,
            width=0.65\linewidth, 
            height=1.6cm,
            ymin=0, ymax=1.1,
            restrict x to domain=3:13, xmin=3, xmax=13, 
            clip mode=individual, clip=true, axis on top,
            xtick={3, 5, 7, 9, 11, 13},
            xlabel={Dag Size $\lvert\mathcal{G}\rvert$},
            xlabel style={font=\prlabelfontsize, at={(ticklabel cs:0.5)}, anchor=north, yshift=2pt},
            ylabel style={font=\prlabelfontsize, at={(ticklabel cs:0.5)}, anchor=south, yshift=-2pt},
            tick label style={font=\prtickfontsize}
        }
    }

    \newcommand{\addstdplot}[5]{
        \addplot[name path=upper, draw=none, forget plot, opacity=0] table[col sep=comma, x=X, y expr=\thisrow{#3}+\thisrow{#3_std}] {#4};
        \addplot[name path=lower, draw=none, forget plot, opacity=0] table[col sep=comma, x=X, y expr=\thisrow{#3}-\thisrow{#3_std}] {#4};
        \addplot[fill=#1, fill opacity=0.15, forget plot, draw=none] fill between[of=upper and lower];
        \addplot[thick, color=#1, mark=#2, mark size=0.6pt] table[col sep=comma, x=X, y=#3] {#4}; 
        \ifx\&#5\&\else\addlegendentry{#5}\fi
    }

    \def\colPartitions{pr-color-purple} 
    \def\colTOPIC{pr-color1a}       
    \def\colFull{pr-color3a}      

    \begin{minipage}{0.32\textwidth}
        \centering
        \begin{tikzpicture}
            \begin{axis}[ylabel={Accuracy}, legend columns=-1, legend to name=LegPerformance]
                \def\f{figs/csv_for_figs/main_ablations/main_accuracy.csv}
                \addstdplot{\colPartitions}{square*}{partitions}{\f}{\ourmethod\!($\Pi^{\text{obs}}$)}
                \addstdplot{\colFull}{triangle*}{full}{\f}{\ourmethod\!($\mathcal{G}^*$)}
                \addstdplot{\colTOPIC}{*}{TOPIC}{\f}{\ourmethod\!($\hat{\mathcal{G}}$)}
            \end{axis}
        \end{tikzpicture}
    \end{minipage}\hfill
    \begin{minipage}{0.32\textwidth}
        \centering
        \begin{tikzpicture}
            \begin{axis}[ylabel={Precision}]
                \def\f{figs/csv_for_figs/main_ablations/main_precision.csv}
                \addstdplot{\colPartitions}{square*}{partitions}{\f}{}
                \addstdplot{\colFull}{triangle*}{full}{\f}{}
                \addstdplot{\colTOPIC}{*}{TOPIC}{\f}{}
            \end{axis}
        \end{tikzpicture}
    \end{minipage}\hfill
    \begin{minipage}{0.32\textwidth}
        \centering
        \begin{tikzpicture}
            \begin{axis}[ylabel={Recall}]
                \def\f{figs/csv_for_figs/main_ablations/main_recall.csv}
                \addstdplot{\colPartitions}{square*}{partitions}{\f}{}
                \addstdplot{\colFull}{triangle*}{full}{\f}{}
                \addstdplot{\colTOPIC}{*}{TOPIC}{\f}{}
            \end{axis}
        \end{tikzpicture}
    \end{minipage}

    \vspace{0.8em}
    \centerline{\ref{LegPerformance}} 
    \caption{The accuracy, precision, and recall for \ourmethod's oracles for the task of structural bias classification}
    \label{fig:oracles_extra}
\end{figure*}

%% file: figs/tex_figs/baselines_new_fill.tex
\begin{figure*}[h] 
    \centering
    \renewcommand{\prtickfontsize}{\tiny}
    \renewcommand{\prlabelfontsize}{\tiny}
    \renewcommand{\prlegendfontsize}{\tiny}

    \pgfplotsset{
        every axis/.append style={
            pretty line,
            scale only axis,       
            width=0.6\linewidth,  
            height=1.3cm,
            unbounded coords=discard,
            xmin=3, xmax=13, 
            enlargelimits=false,
            xtick={3, 5, 7, 9, 11, 13},
            xlabel={Dag Size $|\mathcal{G}|$},
            xlabel style={at={(ticklabel cs:0.5)}, anchor=north, yshift=-4pt},
            y label style={font=\prlabelfontsize},
            x label style={font=\prlabelfontsize},
            clip=true,
            restrict x to domain=3:15,
            mark size=0.7pt,       
            title={}, 
        }
    }

    \newcommand{\addbaselineplot}[5]{
        \addplot[#1, name path=U, draw=none, forget plot] table[x=X, y expr=\thisrow{#3}+\thisrow{#3_std}, col sep=comma] {#4};
        \addplot[#1, name path=D, draw=none, forget plot] table[x=X, y expr=\thisrow{#3}-\thisrow{#3_std}, col sep=comma] {#4};
        \addplot[#1, fill opacity=0.1, forget plot] fill between [of=U and D];
        \addplot[thick, #1, mark=#2] table[x=X, y=#3, col sep=comma] {#4};
        \ifx\&#5\&\else\addlegendentry{#5}\fi
    }

    \def\colFull{pr-color1a}       
    \def\colCoCo{pr-color1c}       
    \def\colFCIJ{pr-color2a}       
    \def\colFCIP{pr-color1b}       
    \def\colLSP{pr-color3a}        
    \def\colLSC{pr-color-gray4}    

    \begin{minipage}{0.32\textwidth}
        \centering
        \begin{tikzpicture}
            \begin{axis}[title = {\tiny Latent Discovery}, ylabel={$F_1$}, ymin=0, ymax=1.1]
                \def\f{figs/csv_for_figs/baseline_plots/baseline_f1_nc.csv}
                \addbaselineplot{\colFull}{*}{topic}{\f}{}
                \addbaselineplot{\colCoCo}{square*}{coco}{\f}{}
                \addbaselineplot{\colFCIJ}{triangle*}{fci-jci}{\f}{}
                \addbaselineplot{\colFCIP}{diamond*}{fci-pooled}{\f}{}
                \addbaselineplot{\colLSP}{*}{dai_pooled}{\f}{}
                \addbaselineplot{\colLSC}{x}{dai_context}{\f}{}
            \end{axis}
        \end{tikzpicture}
    \end{minipage}\hfill
    \begin{minipage}{0.32\textwidth}
        \centering
        \begin{tikzpicture}
            \begin{axis}[title = {\tiny Subset Recovery}, ylabel={$F_1$}, ymin=0, ymax=1.1]
                \def\f{figs/csv_for_figs/baseline_plots/baseline_subset.csv}
                \addbaselineplot{\colFull}{*}{topic}{\f}{}
                \addbaselineplot{\colCoCo}{square*}{coco}{\f}{}
                \addbaselineplot{\colFCIJ}{triangle*}{fci-jci}{\f}{}
                \addbaselineplot{\colFCIP}{diamond*}{fci-pooled}{\f}{}
                \addbaselineplot{\colLSP}{*}{dai_pooled}{\f}{}
                \addbaselineplot{\colLSC}{x}{dai_context}{\f}{}
            \end{axis}
        \end{tikzpicture}
    \end{minipage}\hfill
    \begin{minipage}{0.32\textwidth}
        \centering
        \begin{tikzpicture}
            \begin{axis}[title = {\tiny Structural Bias Classification}, ylabel={$F_1$}, ymin=0, ymax=1.1,
                legend columns=-1, legend to name=LegBaselines]
                \def\f{figs/csv_for_figs/baseline_plots/baseline_f1.csv}
                \addbaselineplot{\colFull}{*}{topic}{\f}{\ourmethod\!($\hat{\mathcal{G}}$)}
                \addbaselineplot{\colCoCo}{square*}{coco}{\f}{CoCo($\mathcal{G}^*$)}
                \addbaselineplot{\colFCIJ}{triangle*}{fci-jci}{\f}{FCI-JCI}
                \addbaselineplot{\colFCIP}{diamond*}{fci-pooled}{\f}{FCI-p}
                \addbaselineplot{\colLSP}{*}{dai_pooled}{\f}{LS-P}
                \addbaselineplot{\colLSC}{x}{dai_context}{\f}{LS-C}
            \end{axis}
        \end{tikzpicture}
    \end{minipage}

    \centerline{\ref{LegBaselines}}
    \caption{F1 scores on (a) latent variable discovery, (b) affected subset recovery, and (c) structural bias classification. This figure also shows the error bars, which were not included in Figure \ref{fig:baselines} for clarity.}
    \label{fig:baselines_errors}
\end{figure*}

%% file: figs/tex_figs/baselines_new_extras.tex
\begin{figure*}[h]
    \centering
    \renewcommand{\prtickfontsize}{\tiny}
    \renewcommand{\prlabelfontsize}{\tiny}
    \renewcommand{\prlegendfontsize}{\tiny}

    \pgfplotsset{
        compat=1.18,
        every axis/.append style={
            pretty line,        
            scale only axis,
            width=0.65\linewidth, 
            height=1.3cm,       
            unbounded coords=discard,
            ymin=0, ymax=1.1,
            xmin=3, xmax=13, 
            clip mode=individual,
            clip=true,
            axis on top,
            xtick={3, 5, 7, 9, 11, 13},
            xlabel={Dag Size $\lvert\mathcal{G}\rvert$},
            mark size = 0.7pt,
            xlabel style={font=\prlabelfontsize, at={(ticklabel cs:0.5)}, anchor=north, yshift=2pt},
            ylabel style={font=\prlabelfontsize, at={(ticklabel cs:0.5)}, anchor=south, yshift=-2pt},
            tick label style={font=\prtickfontsize}
        }
    }

    \def\colFull{pr-color1a}       
    \def\colCoCo{pr-color1c}       
    \def\colFCIJ{pr-color2a}       
    \def\colFCIP{pr-color1b}       
    \def\colLSP{pr-color3a}        
    \def\colLSC{pr-color-gray4}    

    \newcommand{\addstdplot}[5]{
        \addplot[name path=upper, draw=none, forget plot, opacity=0] table[col sep=comma, x=X, y expr=\thisrow{#3}+\thisrow{#3_std}] {#4};
        \addplot[name path=lower, draw=none, forget plot, opacity=0] table[col sep=comma, x=X, y expr=\thisrow{#3}-\thisrow{#3_std}] {#4};
        \addplot[fill=#1, fill opacity=0.15, forget plot, draw=none] fill between[of=upper and lower];
        \addplot[thick, color=#1, mark=#2, mark size=0.5pt] table[col sep=comma, x=X, y=#3] {#4}; 
        \ifx\&#5\&\else\addlegendentry{#5}\fi
    }

    \begin{minipage}{0.32\textwidth}
        \centering
        \begin{tikzpicture}
            \begin{axis}[ylabel={Accuracy},
                legend columns=-1, legend to name=LegBaselines]
                \def\f{figs/csv_for_figs/baseline_plots/baseline_accuracy.csv}
                \addstdplot{\colFull}{*}{topic}{\f}{\ourmethod\!($\hat{\mathcal{G}}$)}
                \addstdplot{\colCoCo}{square*}{coco}{\f}{CoCo($\mathcal{G}^*$)}
                \addstdplot{\colFCIJ}{*}{fci-jci}{\f}{FCI-JCI}
                \addstdplot{\colFCIP}{triangle*}{fci-pooled}{\f}{FCI-P}
                \addstdplot{\colLSP}{*}{dai_pooled}{\f}{LS-P}
                \addstdplot{\colLSC}{square*}{dai_context}{\f}{LS-C}
            \end{axis}
        \end{tikzpicture}
    \end{minipage}\hfill
    \begin{minipage}{0.32\textwidth}
        \centering
        \begin{tikzpicture}
            \begin{axis}[ylabel={Precision}]
                \def\f{figs/csv_for_figs/baseline_plots/baseline_precision.csv}
                \addstdplot{\colFull}{*}{topic}{\f}{}
                \addstdplot{\colCoCo}{square*}{coco}{\f}{}
                \addstdplot{\colFCIJ}{*}{fci-jci}{\f}{}
                \addstdplot{\colFCIP}{triangle*}{fci-pooled}{\f}{}
                \addstdplot{\colLSP}{*}{dai_pooled}{\f}{}
                \addstdplot{\colLSC}{square*}{dai_context}{\f}{}
            \end{axis}
        \end{tikzpicture}
    \end{minipage}\hfill
    \begin{minipage}{0.32\textwidth}
        \centering
        \begin{tikzpicture}
            \begin{axis}[ylabel={Recall}]
                \def\f{figs/csv_for_figs/baseline_plots/baseline_recall.csv}
                \addstdplot{\colFull}{*}{topic}{\f}{}
                \addstdplot{\colCoCo}{square*}{coco}{\f}{}
                \addstdplot{\colFCIJ}{*}{fci-jci}{\f}{}
                \addstdplot{\colFCIP}{triangle*}{fci-pooled}{\f}{}
                \addstdplot{\colLSP}{*}{dai_pooled}{\f}{}
                \addstdplot{\colLSC}{square*}{dai_context}{\f}{}
            \end{axis}
        \end{tikzpicture}
    \end{minipage}

    \vspace{0.8em}
    \centerline{\ref{LegBaselines}}

    \caption{\ourmethod outperforms the baselines for structural bias classification in terms of accuracy, precision, and recall as well.}
    \label{fig:baseline_metrics}
\end{figure*}

%% file: figs/tex_figs/latent_selection_vs_topic.tex
\begin{figure}[h!]
    \centering
    \renewcommand{\prtickfontsize}{\tiny}
    \renewcommand{\prlabelfontsize}{\tiny}
    \renewcommand{\prlegendfontsize}{\tiny}

    \pgfplotsset{
        compat=1.18,
        every axis/.append style={
            pretty line,        
            scale only axis,
            width=0.65\linewidth, 
            height=1.3cm,       
            unbounded coords=discard,
            ymin=0, ymax=1.1,
            restrict x to domain=3:13, 
            xmin=3, xmax=13, 
            clip mode=individual,
            clip=true,
            axis on top,
            xtick={3, 5, 7, 9, 11, 13},
            mark size=0.7pt,
            xlabel={Dag Size $\lvert \mathcal{G} \rvert$},
            xlabel style={font=\prlabelfontsize, at={(ticklabel cs:0.5)}, anchor=north, yshift=2pt},
            ylabel style={font=\prlabelfontsize, at={(ticklabel cs:0.5)}, anchor=south, yshift=-2pt},
            tick label style={font=\prtickfontsize}
        }
    }

    \newcommand{\addstdplot}[5]{
        \addplot+[name path=upper, draw=none, forget plot, opacity=0] table[col sep=comma, x=X, y expr=\thisrow{#3}+\thisrow{#3_std}] {#4};
        \addplot+[name path=lower, draw=none, forget plot, opacity=0] table[col sep=comma, x=X, y expr=\thisrow{#3}-\thisrow{#3_std}] {#4};
        \addplot+[fill opacity=0.15, forget plot, draw=none] fill between[of=upper and lower];
        \addplot+[thick, mark=#2, mark size=0.5pt] table[col sep=comma, x=X, y=#3] {#4}; 
        \ifx\&#5\&\else\addlegendentry{#5}\fi
    }

    \begin{minipage}{0.32\textwidth}
        \centering
        \begin{tikzpicture}
            \begin{axis}[ ylabel={Macro $F_1$},
                legend columns=-1, 
                legend to name=LegLatent]
                
                \def\f{figs/csv_for_figs/latent_comparison/latent_f1.csv}
                \addstdplot{1}{*}{topic}{\f}{\ourmethod\!($\tilde{\mathcal{G}}$)}
                \addstdplot{2}{square*}{dai_pooled}{\f}{LS-P}
                \addstdplot{3}{triangle*}{dai_context}{\f}{LS-C}
            \end{axis}
        \end{tikzpicture}
    \end{minipage}\hfill
    \begin{minipage}{0.32\textwidth}
        \centering
        \begin{tikzpicture}
            \begin{axis}[ ylabel={Sub $F_1$}]
                \def\f{figs/csv_for_figs/latent_comparison/latent_subset.csv}
                \addstdplot{1}{*}{topic}{\f}{}
                \addstdplot{2}{square*}{dai_pooled}{\f}{}
                \addstdplot{3}{triangle*}{dai_context}{\f}{}
            \end{axis}
        \end{tikzpicture}
    \end{minipage}\hfill
    \begin{minipage}{0.32\textwidth}
        \centering
        \begin{tikzpicture}
            \begin{axis}[ ylabel={Accuracy}]
                \def\f{figs/csv_for_figs/latent_comparison/latent_accuracy.csv}
                \addstdplot{1}{*}{topic}{\f}{}
                \addstdplot{2}{square*}{dai_pooled}{\f}{}
                \addstdplot{3}{triangle*}{dai_context}{\f}{}
            \end{axis}
        \end{tikzpicture}
    \end{minipage}

    \vspace{0.8em}
    \centerline{\ref{LegLatent}}

    \caption{Our \topic oracle outperforms \textit{LS-Pooled} and \textit{LS-Contexts} on linear data with Gaussian and uniform noise sampled with equal probability. These are the primary paramteric assumptions by \cite{dai2025latent}.}
    \label{fig:latent_selection_comparison}
\end{figure}

%% file: figs/tex_figs/runtime_baselines.tex
\begin{figure*}[h!]
    \centering
    \renewcommand{\prtickfontsize}{\tiny}
    \renewcommand{\prlabelfontsize}{\tiny}
    \renewcommand{\prlegendfontsize}{\tiny}

    \begin{tikzpicture}
        \begin{axis}[
            pretty line,
            width=0.35\linewidth, 
            height=3.5cm,
            xlabel={Dag Size $\lvert\mathcal{G}\rvert$},
            ylabel={Time (s)},
            ymin=0.000, ymax=400,
            unbounded coords=discard,
            clip=true,
            xtick={3, 5, 7, 9, 11, 13},
            xmax=13,
            mark size = 0.7pt,
            legend columns=2, 
            legend pos=outer north east,
            legend cell align={left},
            legend style={font=\prlegendfontsize}
        ]
            \def\f{figs/csv_for_figs/baseline_plots/baseline_runtime.csv}
            \def\colFull{pr-color1a}       
            \def\colCoCo{pr-color1c}       
            \def\colFCIJ{pr-color2a}       
            \def\colFCIP{pr-color1b}       
            \def\colLSP{pr-color3a}        
            \def\colLSC{pr-color-gray4}    

            \addplot [name path=U1, draw=none, forget plot] table[x=X, y expr=\thisrow{topic}+\thisrow{topic_std}, col sep=comma] {\f};
            \addplot [name path=D1, draw=none, forget plot] table[x=X, y expr=\thisrow{topic}-\thisrow{topic_std}, col sep=comma] {\f};
            \addplot [fill=\colFull, fill opacity=0.0, forget plot] fill between [of=U1 and D1];
            \addplot [thick, \colFull, mark=*] table[x=X, y=topic, col sep=comma] {\f}; 
            \addlegendentry{\ourmethod\!($\mathcal{G}^*$)}

            \addplot [name path=U2, draw=none, forget plot] table[x=X, y expr=\thisrow{coco}+\thisrow{coco_std}, col sep=comma] {\f};
            \addplot [name path=D2, draw=none, forget plot] table[x=X, y expr=\thisrow{coco}-\thisrow{coco_std}, col sep=comma] {\f};
            \addplot [fill=\colCoCo, fill opacity=0.0, forget plot] fill between [of=U2 and D2];
            \addplot [thick, \colCoCo, mark=square*] table[x=X, y=coco, col sep=comma] {\f};
            \addlegendentry{CoCo($\mathcal{G}^*$)}
            
            \addplot [name path=U3, draw=none, forget plot] table[x=X, y expr=\thisrow{fci-jci}+\thisrow{fci-jci_std}, col sep=comma] {\f};
            \addplot [name path=D3, draw=none, forget plot] table[x=X, y expr=\thisrow{fci-jci}-\thisrow{fci-jci_std}, col sep=comma] {\f};
            \addplot [fill=\colFCIJ, fill opacity=0.0, forget plot] fill between [of=U2 and D2];
            \addplot [thick, \colFCIJ, mark=square*] table[x=X, y=fci-jci, col sep=comma] {\f};
            \addlegendentry{FCI-JCI}

            \addplot [name path=U4, draw=none, forget plot] table[x=X, y expr=\thisrow{fci-pooled}+\thisrow{fci-pooled_std}, col sep=comma] {\f};
            \addplot [name path=D4, draw=none, forget plot] table[x=X, y expr=\thisrow{fci-pooled}-\thisrow{fci-pooled_std}, col sep=comma] {\f};
            \addplot [fill=\colFCIP, fill opacity=0.0, forget plot] fill between [of=U2 and D2];
            \addplot [thick, \colFCIP, mark=square*] table[x=X, y=fci-pooled, col sep=comma] {\f};
            \addlegendentry{FCI-P}

            \addplot [name path=U5, draw=none, forget plot] table[x=X, y expr=\thisrow{dai_context}+\thisrow{dai_context_std}, col sep=comma] {\f};
            \addplot [name path=D5, draw=none, forget plot] table[x=X, y expr=\thisrow{dai_context}-\thisrow{dai_context_std}, col sep=comma] {\f};
            \addplot [fill=\colLSP, fill opacity=0.0, forget plot] fill between [of=U3 and D3];
            \addplot [thick, \colLSP, mark=triangle*] table[x=X, y=dai_context, col sep=comma] {\f};
            \addlegendentry{LS-C}

            \addplot [name path=U6, draw=none, forget plot] table[x=X, y expr=\thisrow{dai_pooled}+\thisrow{dai_pooled_std}, col sep=comma] {\f};
            \addplot [name path=D6, draw=none, forget plot] table[x=X, y expr=\thisrow{dai_pooled}-\thisrow{dai_pooled_std}, col sep=comma] {\f};
            \addplot [fill=\colLSC, fill opacity=0.0, forget plot] fill between [of=U4 and D4];
            \addplot [thick, \colLSC, mark=diamond*] table[x=X, y=dai_pooled, col sep=comma] {\f};
            \addlegendentry{LS-P}

        \end{axis}
    \end{tikzpicture}
    \caption{Runtime Analysis for all baselines across different graph sizes.}
    \label{fig:runtime_comparison}
\end{figure*}

%% file: figs/tex_figs/sachs_coco.tex
\begin{figure}[h!]
  \centering
  \begin{tabular}[b]{c}
    \scalebox{0.55}{
    \begin{tikzpicture}
      \node [green(ryb)] (pkc)  {\textbf{Z (PKC)}};
      \node [below left = 1cm and 2cm of pkc] (plcg) {Plc${}_{\gamma}$};
      \node [below = 1cm of plcg] (pip2) {PIP2};
      \node [below right = 0.2cm and 0.3cm of pip2] (pip3) {PIP3};
      \node [below = 1cm of pkc] (pka) {PKA};
      \node [right = 1cm of pka] (raf) {Raf};
      \node [below = 1cm of raf] (mek) {Mek};
      \node [below = 1cm of mek] (erk) {Erk};
      \node [below = 1cm of pka] (p38) {P38};
      \node [below = 2.7cm of pka] (akt) {AKT};
      \node [below left = 1.5cm and 0.5cm of pkc] (jnk) {Jnk};

      \path[-{latex[]}, line width=1.2pt, green(ryb)]
      (pkc) edge (pka)
      (pkc) edge (jnk)
      (pkc) edge (raf)
      (pkc) edge[bend right=25] (p38);
      (pkc) edge[bend right=25] (pip3);

      \path[-{latex[]}, line width=1.2pt, dashed, green(ryb)]
      (pkc) edge[bend left=20] (mek);

      \path[-{latex[]}, dotted, thin]
      (plcg) edge (pip2) (plcg) edge (pka)
      (pip2) edge[bend left=10] (pkc)
      (pip3) edge (jnk) (pip3) edge (pka) (pip3) edge (p38) (pip3) edge (mek) (pip3) edge (akt)
      (pka) edge[bend left=10] (erk) (raf) edge (mek) (mek) edge (erk)
      (erk) edge (akt) (erk) edge (p38) (erk) edge[bend left=50] (pip2);
    \end{tikzpicture}
    }\\
  \small Confounding in CoCo
  \end{tabular}%
  \begin{tabular}[b]{c}
    \scalebox{0.55}{
    \begin{tikzpicture}
      \node [green(ryb)] (pkc)  {\textbf{Z (PKC)}};
      \node [below left = 1cm and 2cm of pkc] (plcg) {Plc${}_{\gamma}$};
      \node [below = 1cm of plcg] (pip2) {PIP2};
      \node [below right = 0.2cm and 0.3cm of pip2] (pip3) {PIP3};
      \node [below = 1cm of pkc] (pka) {PKA};
      \node [right = 1cm of pka] (raf) {Raf};
      \node [below = 1cm of raf] (mek) {Mek};
      \node [below = 1cm of mek] (erk) {Erk};
      \node [below = 1cm of pka] (p38) {P38};
      \node [below = 2.7cm of pka] (akt) {AKT};
      \node [below left = 1.5cm and 0.5cm of pkc] (jnk) {Jnk};

      \path[-{latex[]}, line width=1.2pt, dashed, green(ryb)]
      (pkc) edge (erk)
      (pkc) edge[bend right=25] (akt);
      
      \path[-{latex[]}, dotted, thin]
      (plcg) edge (pip2) (plcg) edge (pka)
      (pip2) edge[bend left=10] (pkc)
      (pip3) edge (jnk) (pip3) edge (pka) (pip3) edge (p38) (pip3) edge (mek) (pip3) edge (akt)
      (pka) edge[bend left=10] (erk) (raf) edge (mek) (mek) edge (erk)
      (erk) edge (akt) (erk) edge (p38) (erk) edge[bend left=50] (pip2);
    \end{tikzpicture}
    }\\
  \small Confounding in JCI-FCI
  \end{tabular}%
  \begin{tabular}[b]{c}
    \scalebox{0.55}{
    \begin{tikzpicture}
      \node [green(ryb)] (pkc)  {\textbf{Z (PKC)}};
      \node [below left = 1cm and 2cm of pkc] (plcg) {Plc${}_{\gamma}$};
      \node [below = 1cm of plcg] (pip2) {PIP2};
      \node [below right = 0.2cm and 0.3cm of pip2] (pip3) {PIP3};
      \node [below = 1cm of pkc] (pka) {PKA};
      \node [right = 1cm of pka] (raf) {Raf};
      \node [below = 1cm of raf] (mek) {Mek};
      \node [below = 1cm of mek] (erk) {Erk};
      \node [below = 1cm of pka] (p38) {P38};
      \node [below = 2.7cm of pka] (akt) {AKT};
      \node [below left = 1.5cm and 0.5cm of pkc] (jnk) {Jnk};

      \path[-{latex[]}, line width=1.2pt, dotted, orange]
      (pka) edge (pkc)
      (jnk) edge (pkc)
      (pip3) edge (pkc)
      (raf) edge[bend right=15] (pkc)
      (p38) edge[bend right=30] (pkc)
      (plcg) edge[bend left=20] (pkc)
      (akt) edge[bend left=40] (pkc)
      (mek) edge[bend left=25] (pkc)
      (erk) edge (pkc);

      \path[-{latex[]}, line width=1.2pt, green(ryb)]
      (pip2) edge[bend left=10] (pkc);
      
      \path[-{latex[]}, dotted, thin]
      (plcg) edge (pip2) (plcg) edge (pka)
      (pip3) edge (jnk) (pip3) edge (pka) (pip3) edge (p38) (pip3) edge (mek) (pip3) edge (akt)
      (pka) edge[bend left=10] (erk) (raf) edge (mek) (mek) edge (erk)
      (erk) edge (akt) (erk) edge (p38) (erk) edge[bend left=50] (pip2);
    \end{tikzpicture}
    }\\
  \small Confounding in LS-Contexts
  \end{tabular}%
  \vspace{1em}
  \begin{tabular}[b]{c}
    \scalebox{0.55}{
    \begin{tikzpicture}
      \node [green(ryb)] (pkc) {\textbf{S (PKC)}};
      \node [below left = 1cm and 2cm of pkc] (plcg) {Plc${}_{\gamma}$};
      \node [below = 1cm of plcg] (pip2) {PIP2};
      \node [below right = 0.2cm and 0.3cm of pip2] (pip3) {PIP3};
      \node [below = 1cm of pkc] (pka) {PKA};
      \node [right = 1cm of pka] (raf) {Raf};
      \node [below = 1cm of raf] (mek) {Mek};
      \node [below = 1cm of mek] (erk) {Erk};
      \node [below = 1cm of pka] (p38) {P38};
      \node [below = 2.7cm of pka] (akt) {AKT};
      \node [below left = 1.5cm and 0.5cm of pkc] (jnk) {Jnk};

      \path[-{latex[]}, line width=1.2pt, dotted, orange]
      (pkc) edge (pka)
      (pkc) edge (raf)
      (pkc) edge[bend left=20] (mek)
      (pkc) edge[bend right=25] (p38);
      (pkc) edge[bend right=25] (pip3);

      \path[-{latex[]}, dotted, thin]
      (erk) edge (p38) (pip2) edge[bend left=20] (pkc)
      (pip3) edge (pka) (pip3) edge (p38) (pip3) edge (mek)
      (pka) edge[bend left=20] (erk)
      (plcg) edge (pip2) (plcg) edge (pka) (pip3) edge (jnk) (pip3) edge (akt)
      (raf) edge (mek) (mek) edge (erk) (erk) edge (akt)
      (erk) edge[bend left=50] (pip2);
    \end{tikzpicture}
    }\\
  \small Selection Bias in CoCo
  \end{tabular}
  \begin{tabular}[b]{c}
    \scalebox{0.55}{
    \begin{tikzpicture}
      \node [green(ryb)] (pkc) {\textbf{S (PKC)}};
      \node [below left = 1cm and 2cm of pkc] (plcg) {Plc${}_{\gamma}$};
      \node [below = 1cm of plcg] (pip2) {PIP2};
      \node [below right = 0.2cm and 0.3cm of pip2] (pip3) {PIP3};
      \node [below = 1cm of pkc] (pka) {PKA};
      \node [right = 1cm of pka] (raf) {Raf};
      \node [below = 1cm of raf] (mek) {Mek};
      \node [below = 1cm of mek] (erk) {Erk};
      \node [below = 1cm of pka] (p38) {P38};
      \node [below = 2.7cm of pka] (akt) {AKT};
      \node [below left = 1.5cm and 0.5cm of pkc] (jnk) {Jnk};

      \path[-{latex[]}, line width=1.2pt, dotted, orange]
      (pkc) edge (erk)
      (pkc) edge[bend right=25] (akt);

      \path[-{latex[]}, dotted, thin]
      (erk) edge (p38) (pip2) edge[bend left=20] (pkc)
      (pip3) edge (pka) (pip3) edge (p38) (pip3) edge (mek)
      (pka) edge[bend left=20] (erk)
      (plcg) edge (pip2) (plcg) edge (pka) (pip3) edge (jnk) (pip3) edge (akt)
      (raf) edge (mek) (mek) edge (erk) (erk) edge (akt)
      (erk) edge[bend left=50] (pip2);
    \end{tikzpicture}
    }\\
  \small Selection bias in JCI-FCI
  \end{tabular}
  \begin{tabular}[b]{c}
    \scalebox{0.55}{
    \begin{tikzpicture}
      \node [orange] (pkc) {\textbf{S (PKC)}};
      \node [below left = 1cm and 2cm of pkc] (plcg) {Plc${}_{\gamma}$};
      \node [below = 1cm of plcg] (pip2) {PIP2};
      \node [below right = 0.2cm and 0.3cm of pip2] (pip3) {PIP3};
      \node [below = 1cm of pkc] (pka) {PKA};
      \node [right = 1cm of pka] (raf) {Raf};
      \node [below = 1cm of raf] (mek) {Mek};
      \node [below = 1cm of mek] (erk) {Erk};
      \node [below = 1cm of pka] (p38) {P38};
      \node [below = 2.7cm of pka] (akt) {AKT};
      \node [below left = 1.5cm and 0.5cm of pkc] (jnk) {Jnk};

      \path[-{latex[]}, line width=1.2pt, dotted, orange]
      (jnk) edge (pkc)
      (p38) edge[bend right=30] (pkc)
      (akt) edge[bend left=40] (pkc);
      
    \path[-{latex[]}, line width=1.2pt, green(ryb)]
    (plcg) edge[bend left=20] (pkc)
    (pip3) edge (pkc)
    (raf) edge[bend right=15] (pkc)
    (mek) edge[bend left=25] (pkc)
    (erk) edge (pkc)
          (pka) edge (pkc)
    (pip2) edge[bend left=20] (pkc);

      \path[-{latex[]}, dotted, thin]
      (erk) edge (p38) 
      (pip3) edge (pka) (pip3) edge (p38) (pip3) edge (mek)
      (pka) edge[bend left=20] (erk)
      (plcg) edge (pip2) (plcg) edge (pka) (pip3) edge (jnk) (pip3) edge (akt)
      (raf) edge (mek) (mek) edge (erk) (erk) edge (akt)
      (erk) edge[bend left=50] (pip2);
    \end{tikzpicture}
    }\\
  \small Selection Bias in LS-Contexts
  \end{tabular}
  \vspace{1em}
  \caption{Retrieved latent structures across baselines.}
  \label{fig:sachs-baselines}
\end{figure}